\documentclass{article}
\PassOptionsToPackage{dvipsnames,table}{xcolor}
\usepackage{preprint,times}
\iclrpreprintcopy

\usepackage{xcolor}
\usepackage{iftex}
\ifXeTeX
  \usepackage{newunicodechar}
  \newunicodechar{’}{\textquoteright}
  \newunicodechar{“}{\textquotedblleft}
  \newunicodechar{”}{\textquotedblright}
\fi
\usepackage{microtype}
\usepackage{graphicx}
\usepackage{subcaption}
\usepackage{booktabs}
\usepackage{tabularx}

\usepackage{colortbl}

\usepackage{placeins}

\newcommand{\captionlabel}[2]{\begingroup
  \setlength{\fboxsep}{1pt}\raisebox{0pt}[0pt][0pt]{\colorbox{#1}{\strut\textcolor{black}{#2}}}\endgroup
}
\newcommand{\greenlabel}{\captionlabel{green!12}{Green}}
\newcommand{\redlabel}{\captionlabel{red!9}{Red}}
\newcommand{\graylabel}{\captionlabel{black!6}{Gray}}

\usepackage[table]{xcolor}

\newcommand{\ablstat}[2]{$#1\pm#2$}
\newcommand{\ablworse}[1]{\cellcolor{red!9}#1}
\newcommand{\ablsimilar}[1]{\cellcolor{black!6}#1}
\newcommand{\ablproposed}[1]{\cellcolor{green!12}#1}
\usepackage{hyperref}

\usepackage[utf8]{inputenc}
\usepackage[T1]{fontenc}
\usepackage{url}
\usepackage{amsfonts}
\usepackage{nicefrac}

\usepackage[most]{tcolorbox}

\usepackage{wrapfig}
\usepackage{multirow}
\usepackage{makecell}
\usepackage{colortbl}

\definecolor{lightgray}{gray}{0.8}

\newcommand{\framework}{\textsc{PolicyLift}\xspace}

\usepackage{amsmath,amssymb,mathtools}
\usepackage{etoolbox}
\newif\ifcompactmaindisplays
\compactmaindisplaystrue
\newcommand{\tightendisplayspacing}{\ifcompactmaindisplays
    \ifdim\abovedisplayskip>2pt
      \addtolength{\abovedisplayskip}{-2pt}\fi
    \ifdim\belowdisplayskip>2pt
      \addtolength{\belowdisplayskip}{-2pt}\fi
    \ifdim\abovedisplayshortskip>2pt
      \addtolength{\abovedisplayshortskip}{-2pt}\fi
    \ifdim\belowdisplayshortskip>2pt
      \addtolength{\belowdisplayshortskip}{-2pt}\fi
  \fi
}
\AtBeginEnvironment{equation}{\tightendisplayspacing}
\AtBeginEnvironment{align}{\tightendisplayspacing}
\usepackage{bbm}
\usepackage[ruled,vlined,linesnumbered]{algorithm2e}
\usepackage{amsthm}
\RestyleAlgo{ruled}
\theoremstyle{plain}
\newtheorem{theorem}{Theorem}[section]
\newtheorem{proposition}[theorem]{Proposition}
\newtheorem{lemma}[theorem]{Lemma}
\newtheorem{corollary}[theorem]{Corollary}
\theoremstyle{definition}

\theoremstyle{remark}

\newcommand{\argmax}{\mathop{\mathrm{arg\,max}}}

\newcommand{\1}{\mathbf{1}}

\usepackage{xspace}

\newcommand*\diff{\mathop{}\!\mathrm{d}}

\newcommand*\circledgreen[1]{\tikz[baseline=(char.base)]{
  \node[shape=circle, draw=ForestGreen!60, fill=ForestGreen!10, thick, inner sep=1pt] (char) {\scriptsize\textsf{#1}};
}}
\newcommand*\circledred[1]{\tikz[baseline=(char.base)]{
  \node[shape=circle, draw=BrickRed!60, fill=BrickRed!10, thick, inner sep=1pt] (char) {\scriptsize\textsf{#1}};
}}
\newcommand*\circledblue[1]{\tikz[baseline=(char.base)]{
  \node[shape=circle, draw=NavyBlue!60, fill=NavyBlue!10, thick, inner sep=1pt, font=\upshape] (char) {\scriptsize\textsf{#1}};
}}
\usepackage{pifont}
\newcommand{\greencheck}{\textcolor{ForestGreen}{\checkmark}}
\newcommand{\redcross}{\textcolor{BrickRed}{\ding{55}}}

\makeatletter
\def\maketag@@@#1{\hbox{\m@th\normalfont\normalsize#1}}
\makeatother

\newtcolorbox{graybox}{
    colback=gray!10,
    colframe=gray!40,
    boxrule=0.3pt,
    arc=2mm,
    left=1mm,
    right=1mm,
    top=0.5mm,
    bottom=0.5mm,
    after skip=2pt
}

\usepackage{enumitem}

\usepackage{xr-hyper}

\usepackage{titlesec}
\titlespacing*{\section}{0pt}{1.5ex plus .3ex minus .2ex}{1ex plus .2ex}
\titlespacing*{\subsection}{0pt}{1.3ex plus .3ex minus .2ex}{0.5ex plus .1ex}

\usepackage[textsize=tiny]{todonotes}

\usepackage{caption}
\usepackage{adjustbox}

\usepackage{float}

\title{Generative Policy Improvement \\ from Observational Data}

\author{\makebox[0.96\textwidth][c]{Jonas Schweisthal$^{1,2}$ \quad
Dennis Frauen$^{1,2}$ \quad
Stefan Feuerriegel$^{1,2}$ \quad
Emma Brunskill$^{3}$\thanks{EB is currently also a researcher at Google DeepMind. EB conducted this research solely in their capacity at Stanford. Google DeepMind had no involvement in the research or preparation of this manuscript.}}\\[0.5em]
$^{1}$LMU Munich \quad
$^{2}$Munich Center for Machine Learning (MCML)\\
$^{3}$Department of Computer Science, Stanford University\\[0.3em]
Correspondence to: \texttt{jonas.schweisthal@lmu.de}
}

\begin{document}

\maketitle

\begin{abstract}
We study how to improve decision policies from observational data to \textit{generate} better actions when actions are complex, structured objects, such as text messages or images. This task is subject to two major challenges: (i) action-level overlap is often too weak for reliable off-policy evaluation and improvement, and (ii) generating new structured actions often requires expensive training or fine-tuning (e.g., of an LLM) and can be ill-defined due to information loss in action representations.
To address both, we propose \framework, a three-stage framework for \textit{generative policy improvement} with structured actions from observational data. In Stage 1, \framework learns a small set of discrete action codes that capture high-level strategies in the observed action space and are tailored to support policy improvement under limited overlap. In Stage 2, it learns a context-dependent policy over these action codes. At deployment time, Stage 3 ``lifts'' the learned code-level policy back to the original action space by generating actions from the historical behavior system in a code-conditioned way.
We show theoretically that improving the policy over action codes guarantees improvement of the induced policy in the original action space. We demonstrate \framework across experiments in synthetic and real-world text generation tasks with consistent policy improvements over the behavior policy.
\end{abstract}

\section{Introduction}

Learning decision policies from observational data, i.e., off-policy learning (OPL), asks how to improve a behavior policy using logged interactions rather than new online experiments~\citep{ athey2021policy, Dudik.2014, Kallus.2022, Swaminathan.2015}. Existing work studies off-policy evaluation (OPE) and OPL primarily under relatively simple action spaces, such as binary, categorical, or low-dimensional interventions, where actions correspond to choices like whether to treat a patient, which product to recommend, or which price to offer~\citep{kitagawa2018should, athey2021policy, Kallus.2022}. In many modern systems, actions are \emph{complex} and \emph{high-dimensional}; for example, an online platform can generate a personalized headline or advertisement~\citep{matias2021upworthy,meguellati2024good} or a mental health support system may suggest an empathic reply in an online peer-support conversation~\citep{sharma2023human}.

We study OPL for such complex action spaces. Given logged triples $(X,A,Y)$ from a behavior policy, where $X$ is a context, $A$ is an action, and $Y$ is an observed outcome, the goal is to learn a target policy that can \emph{generate} improved structured actions at deployment time. In online mental health support, $X$ may encode the help-seeker's post and conversation history, $A$ is a generated or suggested supportive response, and $Y$ measures empathy, engagement, or conversation quality. In headline optimization, $X$ may encode an article excerpt, $A$ is a generated headline, and $Y$ is reader engagement. 

However, such complex, high-dimensional actions are often implicitly generated by a latent lower-dimensional structure, such as how style and content might be used to generate a specific supportive exercise suggestion.
In these settings, policy improvement requires not selecting over a small discrete set of actions, but \emph{generating} high-dimensional, complex actions like such messages. Two \textbf{challenges} make this substantially harder than standard off-policy learning:

\begin{graybox}

\circledred{1}~\textit{\textbf{Limited overlap.}} OPE and OPL require overlap ~\citep{Dudik.2014, Kallus.2022}, which means that the behavior policy covers the actions a new target policy would select. For complex actions, this is often implausible: the behavior system concentrates on a vanishingly small subset of the action space for each context. Even when the behavior system is stochastic, 
the randomness is often local 
and thus does not induce meaningful coverage across qualitatively different actions. 
Existing approaches to limited overlap either constrain the policy class~\citep{kallus.2021, schweisthal2023reliable, zhao2024positivity} or optimize conservative objectives~\citep{jin2025policy}, but these operate in the original action space and cannot resolve the fundamental coverage gap in high-dimensional structured spaces. 
\vspace{0.25em}

\circledred{2}~\textit{\textbf{Information loss.}} It is unclear a~priori how to learn action representations that both make overlap plausible and still allow mapping coarse latent actions back to raw actions. A representation-level decision (e.g., a text style) is not yet a deployable action: deployment still requires a specific response in the original action space (e.g., natural language). Prior work learns improved policies in a subspace of the actions~\citep{saito2022off,peng2023offline}, but if the representation does not fully mediate the effect of the action on the outcome~\citep{saito2023off}, instantiating raw actions (e.g., via a decoder) can induce a systematic mismatch between the value optimized in representation space and the value delivered in the original action space. We formalize this later as a \emph{deployment gap}.
\end{graybox}
\vspace{0.3em}

To address these challenges, we propose \framework, a framework for \emph{generative} policy improvement from observational data. 
\framework first maps each observed action to a discrete \emph{action code}, where \emph{codes} represent high-level action strategies that are easier to compare, estimate, and optimize under limited overlap. 
Next, it learns a context-dependent policy over these codes using standard off-policy learning tools. 
Finally, it lifts the learned code-level policy back to the original action space by using the behavior system itself as a generator. 
We call this \emph{behavior-consistent lifting} (BCL) and show that, when the behavior system is sampled until an action with the target code is found, BCL eliminates the deployment gap, so improvements in true code-level value transfer exactly to the deployed policy over structured actions.

Our contributions are the following. 
\textbf{(i)} We formulate OPL with \textit{generative} structured
actions, where the target policy must generate new actions rather than select from a fixed action set, and propose \framework to address this setting. 
\textbf{(ii)} We derive various theoretical guarantees for our framework. In particular, we prove that improving the policy over action codes guarantees improvement of the induced policy in the original action space.
\textbf{(iii)} We empirically validate our theory and show that \framework improves over baselines in LLM-based text generation tasks.

\section{Related Work}
\label{sec:related_work}

\setlength{\columnsep}{1.6em}
\setlength{\intextsep}{0.2em}

\begin{wrapfigure}{r}{0.50\textwidth}
\vspace{-0.8\baselineskip}
\centering

\captionsetup{type=table,font=footnotesize,labelfont=bf,skip=2pt}

\caption{\scriptsize
\emph{Legend.} \greencheck = addressed; \redcross = not addressed;
(\greencheck) = partially addressed; (\redcross) = assumed or not enforced; -- = not applicable.
}
\label{tab:related_work}

\vspace{-0.4em}

{\tiny
\setlength{\tabcolsep}{1.4pt}
\renewcommand{\arraystretch}{0.82}

\begin{adjustbox}{max width=\linewidth}
\begin{tabular}{>{\raggedright\arraybackslash}p{0.65\linewidth}ccccc}
\toprule
& \makecell{\textbf{Unseen structured}\\ \textbf{actions}}
& \makecell{\textbf{Learned}\\ \textbf{repr.}}
& \makecell{\textbf{Overlap}\\ \textbf{controlled}}
& \makecell{\textbf{Learns}\\ \textbf{policy}}
& \makecell{\textbf{Generative}\\ \textbf{deployment}} \\
\midrule
Overlap-constrained OPL~\citep{kallus.2021,schweisthal2023reliable,zhao2024positivity,jin2025policy}
& \redcross & -- & \greencheck & \greencheck & -- \\
Action embeddings / MIPS~\citep{saito2022off, saito2023off}
& \redcross & (\greencheck) & (\redcross) & \redcross & \redcross \\
PoTEC~\citep{saitopotec2025}
& \redcross & (\greencheck) & (\redcross) & \greencheck & \redcross \\
Learned action embeddings~\citep{peng2023offline, cief2024learning}
& \redcross & \greencheck & (\redcross) & \redcross & \redcross \\
Prompt-guided generation~\citep{kiyohara2025off}
& (\greencheck) & \redcross & (\redcross) & \greencheck & \greencheck \\
\midrule
\textbf{\framework (ours)}
& \greencheck & \greencheck & \greencheck & \greencheck & \greencheck \\
\bottomrule
\end{tabular}
\end{adjustbox}
}

\vspace{0.2em}

\vspace{-0.3\baselineskip}
\end{wrapfigure}

\textbf{Policy learning from observational data.}
OPL studies how to improve a behavior policy from logged interactions without collecting new online data. 
Standard approaches estimate or optimize policy value using inverse propensity weighting, doubly robust estimation, or regression adjustment~\citep{Swaminathan.2015, Dudik.2014, Kallus.2022, athey2021policy}. 
A separate stream addresses limited overlap by modifying the policy-learning problem itself; for example, by restricting, retargeting, or otherwise regularizing the learned policy toward better-supported interventions~\citep{kallus.2021, schweisthal2023reliable, zhao2024positivity}, or using pessimistic value criteria that penalize policies with high uncertainty under the logged data~\citep{jin2025policy,Swaminathan.2015}. 
\textit{However, these approaches operate in simple or low-dimensional action spaces and do not address policy learning with structured actions.}

\setlength{\columnsep}{1.6em}
\setlength{\intextsep}{0.2em}

\begin{wrapfigure}[12]{R}{0.60\textwidth}
\vspace{-0.4\baselineskip}
\centering

\includegraphics[width=\linewidth]{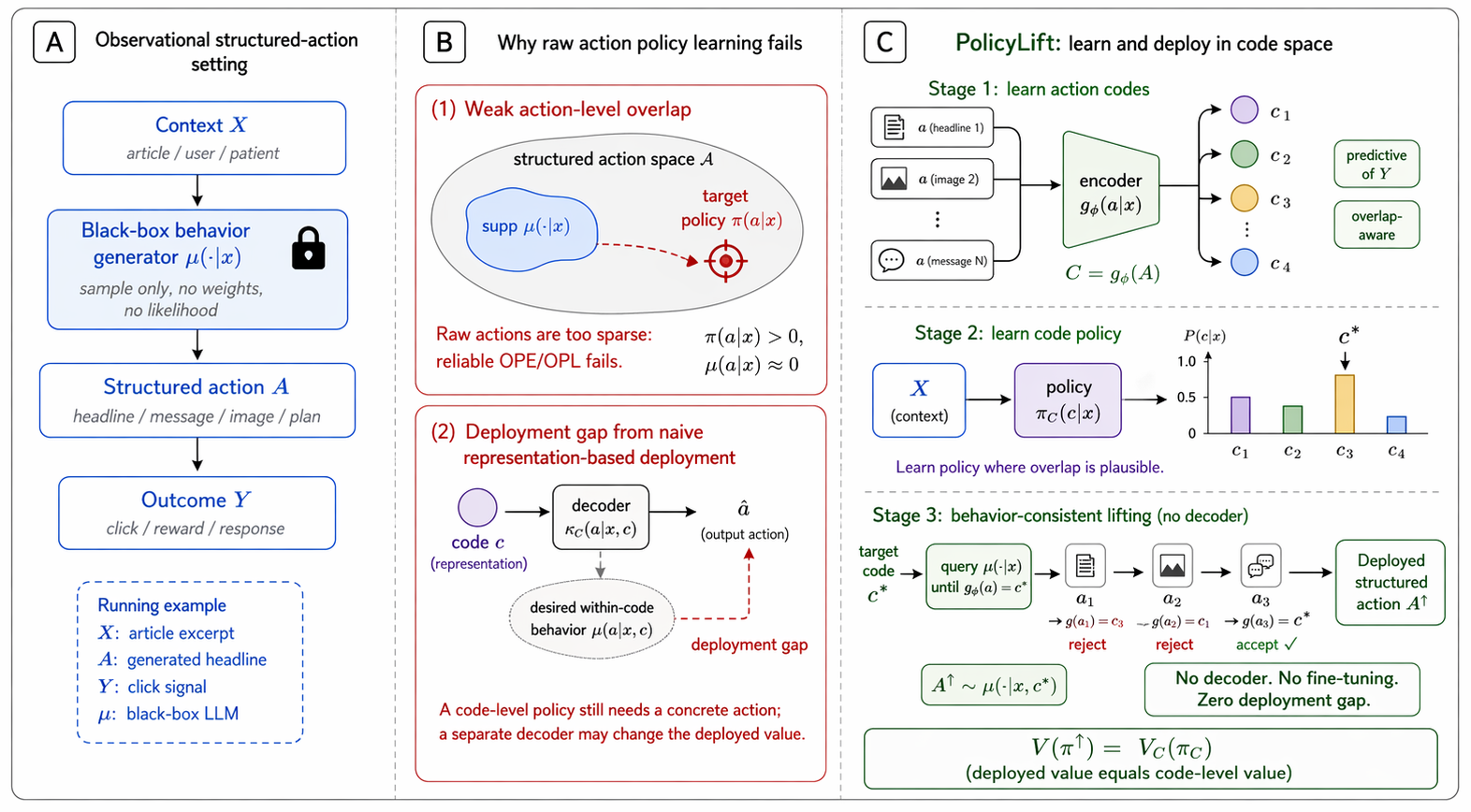}

\vspace{-0.5em}

\caption{\textbf{Overview.} Generative policy improvement from observational data with complex, structured actions.}
\label{fig:framework}

\vspace{-0.4\baselineskip}
\end{wrapfigure}

\textbf{OPE/OPL with complex action spaces} (an extended related work is in Appendix~\ref{app:related_work}). 
Several works extend OPE and OPL beyond binary or categorical actions. For continuous actions, semiparametric and doubly robust methods learn policies without discretizing the action space~\citep{Kallus.2018e,Demirer.2019}. For OPE in large action spaces, action-embedding methods replace action-level propensities with marginalized propensities (MIPS) over embeddings to reduce variance \citep{saito2022off}, potentially also accounting for residual variance within clusters (OffCEM,~\citep{saito2023off}). PoTEC extends this idea to OPL by decomposing the policy into cluster selection and within-cluster action selection \citep{saitopotec2025}, using pre-defined clusters or a separate method to a~priori perform action clustering (e.g.~\citealp{peng2023offline}). \emph{These methods operate on a fixed, enumerable action set, and do not extend to the generation of new structured actions.} Other approaches learn action groups or action embeddings from logged data~\citep{peng2023offline,cief2024learning}. \emph{However, these methods target OPE, not OPL, and assume sufficient overlap in the learned representation. In contrast, \framework learns action codes that encourage overlap, optimizes a policy over those codes, and generates code-conditional structured actions by sampling from the behavior system.} 

Closest to our setting is \citet{kiyohara2025off}, which studies OPL for sentence generation, but where the logged action is an observed discrete prompt and deployment uses a frozen generator (behavior system) to map prompts to text. \emph{In contrast, \framework learns a code policy from logged structured actions \textbf{without} requiring logged prompts, and deploys it using only sampling access to the behavior system.}

\section{Problem Setup}
\label{sec:setup}

\textbf{Off-policy learning.}
We consider the standard setup for policy learning from observational data~\citep{ Dudik.2014, athey2021policy, Kallus.2022}.
We observe i.i.d.\ data $\mathcal D=\{(X_i,A_i,Y_i)\}_{i=1}^n$ generated by a historical \emph{behavior policy} $\mu(\cdot\mid x)$, where $X\in\mathcal X$ is a context, $A\in\mathcal A$ is a structured action such as an LLM-generated text, and $Y\in\mathbb R$ is an outcome.
Importantly, we do \textit{not} assume a fixed pre-defined set of finite actions. We let $Y(a)$ denote the potential outcome under action $a$~\citep{Rubin.1974}. For any policy $\pi(\cdot\mid x)$ over $\mathcal A$, the \emph{policy value} is $V(\pi) := \mathbb E[Y(\tilde A)], \; \tilde A\sim\pi(\cdot\mid X).$ 
In addition to logged data, we assume sampling access to the behavior policy at deployment time: given a context $x$, we can draw candidate structured actions from $\mu(\cdot\mid x)$, but we do not require access to its weights, likelihoods, or training data. This captures settings where the historical behavior system is available as a black-box generator.
Our goal is to learn a target policy that can \emph{generate complex raw actions} at deployment.

\textbf{Identification.}
We adopt the standard identification assumptions for OPL~\citep{Rubin.1974, athey2021policy}, namely, \emph{consistency} (i.e., $Y=Y(A)$) and \emph{ignorability} (i.e., $Y(a)\perp A\mid X$ for all $a\in\mathcal A$). Under these assumptions, the conditional expected potential outcome is identified by the regression function $m(x,a):= \mathbb E[Y(a)\mid X=x] = \mathbb E[Y\mid X=x,A=a]$, and the value of any target policy satisfies $V(\pi)=\mathbb E[m(X,\tilde A)]$ with $\tilde A\sim\pi(\cdot\mid X)$. 
In addition, identification of $V(\pi)$ for a target policy $\pi$ requires  \emph{overlap} (also called positivity or common support)~\citep{rosenbaum1983central, imbens2015causal, damour2021overlap}, namely, the behavior policy covers the support of $\pi$; i.e., for $P_X$-almost all $x\in\mathcal X$,
$\mu(a\mid x)>0$ whenever $\pi(a\mid x)>0$.
Overlap violations bias value estimates~\citep{felicioni2022off, sachdeva2020off}, and weak overlap inflates the variance of importance-weighted estimators~\citep{saito2022off, sachdeva2024off, crump2009dealing} and the extrapolation error of regression-based ones~\citep{farajtabar2018more, voloshin2021empirical}.

\section{PolicyLift Framework}
\label{sec:codes}

\textbf{Overview:} We propose \framework, a three-stage \textit{generative} framework for \textit{policy improvement} from observational
data over structured action spaces. Figure~\ref{fig:framework} shows the framework, which separates policy learning from action generation. Briefly, in \textbf{Stage}~\circledblue{1}, we encode each observed action into a discrete action code $C=g_\phi(A)\in\{1,\ldots,K\}$ via a tailored learned encoder. This maps the high-dimensional structured action space $\mathcal A$ to a low-dimensional code space where overlap is more plausible. In \textbf{Stage}~\circledblue{2}, we learn a context-dependent code-level policy $\pi_C(c\mid x)$ from the induced data $(X_i,C_i,Y_i)$. In \textbf{Stage}~\circledblue{3}, we deploy $\pi_C$ through our so-called \emph{behavior-consistent lifting}: for a target code sampled from $\pi_C(\cdot\mid x)$, we draw candidate actions from the behavior system $\mu(\cdot\mid x)$ and return an action whose code matches the target. This makes the learned policy \textit{generative}: \framework outputs a deployed policy sampler $\pi^\uparrow(\cdot\mid x)$ over original structured actions. Algorithm~\ref{alg:policylift} shows the pseudocode.

\textbf{Stage~\circledblue{1}: Learning action codes.} Stage 1 learns an encoder $g_\phi: \mathcal A \to \{1,\dots,K\}$ that maps each action to a discrete \emph{action code} $C = g_\phi(A)$, with induced code-level behavior policy $\mu_C(c\mid x) := \Pr(C = c \mid X = x)$. Encoder learning targets two properties that determine the value of the code-level policy learning in Stage~2, namely the code should be (i)~\emph{predictive} of the outcome, and (ii)~the code-conditional context distribution should \emph{overlap} with the marginal. We describe both below:

$\bullet$\,\textbf{(i) \textit{Predictiveness.}} The code $C$ should be predictive of $Y$ given $X$, i.e., the code regression function $m_C(x, c) := \mathbb E[Y \mid X=x, C=c]$ should vary meaningfully across $c$. If $C$ carries no information about $Y$ beyond $X$, no code-level policy can improve over $\mu$. Thus, the underlying variation determines how much a code-level policy can improve over the behavior policy.

$\bullet$\,\textbf{(ii) \textit{Overlap.}} The code-level overlap assumption needed for policy learning and deployment is formulated wrt. the behavior code propensity $\mu_C(c\mid x)$. Stage~1 uses a distributional surrogate for this assumption instead: the code-conditional context distribution $Q_c^\phi := P(X \mid C = c)$ should be close to the marginal $P_X$ for each code $c$. These two notions are related by Bayes' rule. Let $p_C^\phi(c) := \Pr(C=c)$. Whenever \(Q_c^\phi \ll P_X\) (i.e., \(Q_c^\phi\) is absolutely continuous with respect to \(P_X\), so it assigns no mass outside the support of \(P_X\)), we aim for
$\mu_C(c\mid x)=p_C^\phi(c)\frac{\diff Q_c^\phi}{\diff P_X}(x)$
for $P_X$-almost all $x$. Thus, in the ideal balanced case $Q_c^\phi=P_X$, the code propensity is context-independent, $\mu_C(c\mid x)=p_C^\phi(c)$. More generally, distributional balance controls how much code propensities vary with $x$, while marginal code balance controls whether any code is globally rare. This balance reduces extrapolation when code-level outcome regressors are evaluated under the
marginal context distribution.

\textbf{Objective.} We learn the action encoding $g_\phi$ by minimizing a trade-off between the two properties,
\begin{align}
\min_{\phi,\theta}\quad
&\underbrace{\frac{1}{n}\sum_{i=1}^n \big(Y_i - f_\theta(X_i, g_\phi(A_i))\big)^2}_{\text{predictiveness }}
+
\beta\underbrace{\sum_{c=1}^K \widehat D\big(\widehat P_X, \widehat Q_c^{\phi}\big)}_{\text{distributional overlap }}
+
\gamma\underbrace{\mathrm{KL}\big(\widehat p_C^\phi \,\|\, \mathrm{Unif}_K\big)}_{\text{marginal code balance}}.
\label{eq:stage1_obj}
\end{align}
Here $f_\theta$ is an outcome predictor, $\widehat D$ is an empirical distributional distance such as sliced Wasserstein, and $\widehat p_C^\phi$ is the empirical marginal distribution of learned codes. The trade-off weight $\beta \geq 0$ balances attainable value against finite-sample reliability, while $\gamma\geq 0$ prevents globally rare or empty codes (and a trivial single-code minimzer of the overlap term). 

\textbf{Stage~\circledblue{2}: Learning a code-level policy.} Given the encoder from Stage~1, we re-encode the logged actions as $C_i = g_\phi(A_i)$ and learn a code-level policy $\pi_C(\cdot\mid x)$ on $(X_i, C_i, Y_i)$.

$\bullet$\,\emph{Feasible policy class.} For a threshold $\tau > 0$, we restrict to
\begin{equation}
\Pi_{C,\tau} := \{\pi_C : \pi_C(c\mid x) > 0 \Rightarrow \mu_C(c\mid x) \geq \tau\}.
\label{eq:Pi_C_tau}
\end{equation}
This constraint is the pointwise code-level overlap condition used for policy learning and deployment. 
The Stage 1 distributional and marginal-balance regularizers make the constraint easier to satisfy by encouraging code propensities to be less context-dependent and not globally rare. However, these regularizers are only surrogates: distributional balance alone does not imply a uniform lower bound on $\mu_C(c\mid x)$.

$\bullet$\,\emph{Objective.} We learn $\pi_C \in \Pi_{C,\tau}$ by maximizing a regression-adjusted (RA) value estimator
\begin{equation}
\widehat V_{\mathrm{RA}}(\pi_C) := \frac{1}{n}\sum_{i=1}^n \sum_{c=1}^K \pi_C(c\mid X_i)\,\widehat m(X_i, c),
\label{eq:V_RA}
\end{equation}
where $\widehat m(x,c)$ is a learned regressor for $m_C(x,c)$. We use RA rather than IPW or DR estimators because Stage 1 directly regularizes the context regions on which the code-level regressors are evaluated (Theorem~\eqref{thm:ra}).

\begin{wrapfigure}[17]{r}{0.45\textwidth}
\vspace{-0.3cm}
\centering

\begin{minipage}{\linewidth}
\begin{algorithm}[H]
\scriptsize
\DontPrintSemicolon
\caption{\framework}
\label{alg:policylift}
\KwIn{logged data $\mathcal D=\{(X_i,A_i,Y_i)\}_{i=1}^n$; number of codes $K$; overlap weights $\beta\ge 0, \gamma\ge 0$; threshold $\tau>0$; draw budget $T_{\max}$.}
\KwOut{deployed policy sampler $\widehat\pi^\uparrow(\cdot\mid x)$.}

\BlankLine
\textbf{Stage}~\circledblue{1}: \textbf{Learn action codes}\;
Fit encoder $\widehat g_\phi$ and outcome predictor $f_\theta$ 
\;
Re-encode logged actions: $C_i \leftarrow \widehat g_\phi(A_i)$\;

\BlankLine
\textbf{Stage}~\circledblue{2}: \textbf{Learn code-level policy}\;
Fit code-level regression $\widehat m(x,c) \approx m_C(x,c)$ on $\{(X_i,C_i,Y_i)\}_{i=1}^n$\;
Solve $\widehat\pi_C \in \argmax_{\pi_C\in\Pi_{C,\tau}} \widehat V_{\mathrm{RA}}(\pi_C)$ 
\;

\BlankLine
\textbf{Stage}~\circledblue{3}: \textbf{Deployment at context $x$}\;
Sample target code $c^\star \sim \widehat\pi_C(\cdot\mid x)$\;
\For{$t=1,\ldots,T_{\max}$}{
    Sample $a_t\sim\mu(\cdot\mid x)$\;
    \If{$\widehat g_\phi(a_t)=c^\star$}{
        \Return $a_t$\;
    }
}
\Return an independent draw from $\mu(\cdot\mid x)$\;
\end{algorithm}
\end{minipage}
\vspace{-0.1cm}
\end{wrapfigure}

\textbf{Stage~\circledblue{3}: Behavior-consistent lifting.} The learned code-level policy $\pi_C$ is not itself a deployable policy yet: at deployment time, the system must output a concrete action $a \in \mathcal A$. We propose a simple but effective solution as follows.
For a context $x$, Stage~3 first selects a target code given the learned code policy $c^\star \sim \pi_C(\cdot\mid x)$, then generates an action whose code is $c^\star$ by \emph{behavior-consistent lifting}: we sample $a \sim \mu(\cdot\mid x)$ and accept the first draw with $g_\phi(a) = c^\star$.
In practice, we cap the number of draws at $T_{\max}$ and fall back to $\mu(\cdot\mid x)$ if no match is found. The threshold $\tau$ also controls deployment cost: for supported target codes, the expected number of draws scales as $1/\mu_C(c^\star\mid x)$, and the feasible class keeps this quantity bounded.

\section{Theoretical Analysis}
\label{sec:theory}

We now analyze three key properties of \framework: (1)~\emph{behavior-consistent lifting} makes a code-level policy deployable in the original action space without changing its value; (2)~\emph{predictiveness of action codes} determines the attainable improvement of code-level policy learning; and (3)~\emph{regression-adjusted policy learning} with balanced action codes to reduce extrapolation error. Together, these results justify the three design choices of \framework: behavior-consistent lifting, predictive code learning, and overlap regularization. All proofs are in Appendix~\ref{app:proofs}.

\subsection{Behavior-consistent lifting}
\label{sec:theory_lifting}

We first analyze \textbf{Stage}~\circledblue{3}.
The key quantity is the potential value difference between the code-level policy and the deployed action-level policy. We define an \emph{instantiation rule} (i.e., decoder or generator) as a conditional distribution $\kappa(a\mid x,c)$ over actions given a context-code pair. Combined with a code policy $\pi_C$, it induces a deployed policy on actions~\citep{saitopotec2025} $\pi^{\kappa}(a\mid x) := \sum_{c=1}^K \pi_C(c\mid x)\,\kappa(a\mid x,c)$.

Behavior-consistent lifting corresponds to $\kappa(\cdot\mid x,c) = \mu(\cdot\mid x,c)$, the within-code behavior distribution. We denote the resulting deployed policy by
\begin{equation}
\pi^{\uparrow}(a\mid x) := \sum_{c=1}^K \pi_C(c\mid x)\,\mu(a\mid x,c).
\label{eq:pi_up_def}
\end{equation}
The acceptance loop used by \framework implements this instantiation rule exactly in the ideal uncapped case: Proposition~\ref{prop:reject_lift} in Appendix~\ref{app:acceptance} shows that accepting the first draw from $\mu(\cdot\mid x)$ whose code equals $c$ returns a sample from $\mu(\cdot\mid x,c)$. Thus, behavior-consistent lifting is \emph{not} an additional learned decoder, but \emph{a sampler of the within-code behavior distribution}.

\begin{theorem}[Value under code-based deployment]
\label{thm:deployment_gap}
Under consistency and ignorability, for any code policy $\pi_C$ and instantiation rule $\kappa$, the policy value at deployment is given by
\begingroup
\begin{equation}
\begin{aligned}
V(\pi_\kappa)
&=
\underbrace{V_C(\pi_C)}_{\text{code-level value}}
+
\underbrace{
\mathbb E\left[
\sum_{c=1}^K \pi_C(c\mid X)\Delta_\kappa(X,c)
\right]
}_{\text{deployment gap}},
\end{aligned}
\label{eq:deployment_gap}
\end{equation}
\endgroup
where
$V_C(\pi_C) := \mathbb E\!\left[\sum_c \pi_C(c\mid X)\,m_C(X,c)\right]$,
$\Delta_\kappa(x,c) := \mathbb E_{A\sim \kappa(\cdot\mid x,c)}[m(x,A)] - \mathbb E_{A\sim \mu(\cdot\mid x,c)}[m(x,A)]$. 
In particular, $\Delta_\kappa(x,c) = 0$ if $\kappa(\cdot\mid x,c) = \mu(\cdot\mid x,c)$, or if $m(x,a)$ is constant over actions receiving mass within code $c$.\footnote{This condition is analogous to Assumption 3.2 of \citet{saito2022off}: the action embedding fully mediates the effect of the action on the outcome.}
\end{theorem}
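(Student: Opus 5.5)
The plan is a direct computation: write down the value of the deployed policy, add and subtract the within-code behavior average, and identify the two resulting pieces. By the identification result of Section~\ref{sec:setup}, under consistency and ignorability the value of any policy satisfies $V(\pi)=\mathbb E[m(X,\tilde A)]$ with $\tilde A\sim\pi(\cdot\mid X)$. Applied to $\pi_\kappa(a\mid x)=\sum_c\pi_C(c\mid x)\kappa(a\mid x,c)$, we use linearity of the inner expectation in the mixture (Fubini/tower property, with $m$ assumed integrable) to get
\begin{equation*}
V(\pi_\kappa)=\mathbb E\Big[\sum_{c=1}^K \pi_C(c\mid X)\,\mathbb E_{A\sim\kappa(\cdot\mid X,c)}[m(X,A)]\Big].
\end{equation*}
We then add and subtract $\mathbb E_{A\sim\mu(\cdot\mid X,c)}[m(X,A)]$ inside the sum. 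The subtracted-off difference is exactly $\Delta_\kappa(X,c)$, which yields the deployment-gap term.

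The remaining step, and the only one with content, is to show that $\mathbb E_{A\sim\mu(\cdot\mid x,c)}[m(x,A)]=m_C(x,c)$. Since $C=g_\phi(A)$ is a deterministic function of $A$, the tower property gives
\begin{equation*}
m_C(x,c)=\mathbb E[Y\mid X=x,C=c]=\mathbb E\big[\mathbb E[Y\mid X=x,A,C=c]\mid X=x,C=c\big].
\end{equation*}
Because $C$ is a function of $A$, the inner conditional expectation is $\mathbb E[Y\mid X=x,A]=m(x,A)$. The outer expectation is taken over $A\sim\mu(\cdot\mid x,c)$, the behavior distribution conditioned on $g_\phi(A)=c$. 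This requires $\mu_C(c\mid x)>0$ so that the conditional is well defined. For codes with $\mu_C(c\mid x)=0$, I would note that both sides are defined by convention, or that the statement is understood on the support of $\mu_C$; this is the main technical caveat. Substituting gives the code-level value $V_C(\pi_C)$ plus the gap, which is the decomposition~\eqref{eq:deployment_gap}.

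Finally, the two special cases follow immediately. If $\kappa(\cdot\mid x,c)=\mu(\cdot\mid x,c)$, the two expectations in $\Delta_\kappa$ coincide, so $\Delta_\kappa(x,c)=0$. If instead $m(x,\cdot)\equiv m_0$ on the actions receiving mass within code $c$ (under both $\kappa$ and $\mu$), both expectations equal $m_0$, and again $\Delta_\kappa(x,c)=0$.
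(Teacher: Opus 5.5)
Your proposal is correct and follows the paper's proof. Both compute $V(\pi_\kappa)$ as a mixture over codes, add and subtract the within-code behavior average, and identify $m_C(x,c)=\mathbb E_{A\sim\mu(\cdot\mid x,c)}[m(x,A)]$ via the tower property; the paper isolates this identity as Lemma~\ref{lem:code_mixture}, under the same $\mu_C(c\mid x)>0$ caveat you flag. The only cosmetic difference is that the paper first rewrites $Y=Y(A)$ and invokes ignorability inside the tower step, whereas you use that $C$ is a function of $A$ together with the identified regression $\mathbb E[Y\mid X=x,A=a]=m(x,a)$.
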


Intuitively, Theorem~\ref{thm:deployment_gap} decomposes deployed value into (i)~the code-level policy value $V_C(\pi_C)$ and (ii)~the  \textit{deployment gap}, which measures how the instantiation rule $\kappa$ changes the within-code action distribution relative to the behavior system. 
The deployment gap can be nonzero whenever $\kappa$ reweights actions within a code in a way that changes their expected outcomes.\footnote{The deployment gap can be positive if the instantiation rule improves over the code-level policy value; PoTEC~\citep{saitopotec2025} uses a two-stage OPL approach that exploits this for a finite, enumerable action set. However, in complex structured action spaces such as text, new actions must be generated for a new context $x$. The key challenge is that $\kappa$ can then place mass outside the within-code support of $\mu$, making the deployed value \textbf{not} identified from logged data when this occurs with positive probability under the learned policy (Proposition~\ref{prop:nonident_support_mismatch}). The deployed policy can then be even \emph{worse} than the original behavior policy.} 
Our behavior-consistent lifting avoids this issue by setting $\kappa(\cdot\mid x,c)=\mu(\cdot\mid x,c)$ directly. Consequently, Theorem~\ref{thm:deployment_gap} gives $\Delta_\kappa\equiv 0$, so the deployment gap vanishes: the value optimized at the code level matches exactly the value achieved after deployment, as shown in the following corollary:
\begin{corollary}[Value equivalence]
\label{cor:value_equivalence}
Under consistency and ignorability, for any $\pi_C$ satisfying code-level overlap, the behavior-consistent deployed policy from Eq.~\eqref{eq:pi_up_def} satisfies $V(\pi^{\uparrow}) = V_C(\pi_C).$
\end{corollary}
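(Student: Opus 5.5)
The plan is to obtain the corollary directly from Theorem~\ref{thm:deployment_gap} by specializing the instantiation rule to $\kappa(\cdot\mid x,c)=\mu(\cdot\mid x,c)$. Then $\pi_\kappa=\pi^{\uparrow}$ by Eq.~\eqref{eq:pi_up_def}. Wherever the within-code distribution is defined, the two expectations in $\Delta_\kappa(x,c)$ coincide, so $\Delta_\kappa(x,c)=0$ and the deployment gap vanishes, which gives $V(\pi^\uparrow)=V_C(\pi_C)$. The only substantive issue is well-definedness: $\mu(\cdot\mid x,c)$ is defined only when $\mu_C(c\mid x)>0$. This is exactly where code-level overlap enters.

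First, I will make precise that code-level overlap means $\pi_C(c\mid x)>0\Rightarrow\mu_C(c\mid x)>0$ for $P_X$-almost all $x$; membership in $\Pi_{C,\tau}$ implies this. On that set, define $\mu(\cdot\mid x,c)$ as the conditional law of $A$ given $X=x$ and $g_\phi(A)=c$. Its density is $\mu(a\mid x)\mathbbm{1}\{g_\phi(a)=c\}/\mu_C(c\mid x)$, and it is a valid probability distribution. For pairs with $\pi_C(c\mid x)=0$, any choice of $\kappa$ contributes nothing to $\pi^\uparrow$ or to the gap term, so the definition there is irrelevant. Hence $\pi^\uparrow(\cdot\mid x)$ is a well-defined policy for a.e.\ $x$.

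Second, as a sanity check independent of the theorem, I will verify the identity $\mathbb E_{A\sim\mu(\cdot\mid x,c)}[m(x,A)]=m_C(x,c)$. The right-hand side is $\mathbb E[Y\mid X=x,C=c]$. By the tower property, conditioning additionally on $A$ (which determines $C=g_\phi(A)$), it equals $\mathbb E[\mathbb E[Y\mid X=x,A]\mid X=x,C=c]$. This is precisely the within-code average of $m(x,A)$.

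Finally, I will compute directly. Using $V(\pi)=\mathbb E[m(X,\tilde A)]$, which is identified under consistency and ignorability, and Fubini over the finite sum,
\[
V(\pi^\uparrow)=\mathbb E\Big[\sum_{c}\pi_C(c\mid X)\,\mathbb E_{A\sim\mu(\cdot\mid X,c)}[m(X,A)]\Big]=\mathbb E\Big[\sum_c \pi_C(c\mid X)m_C(X,c)\Big]=V_C(\pi_C).
\]
I expect the main obstacle to be only the measure-theoretic care in the first step: showing that the restriction to codes with positive propensity holds almost surely, and that $m$ is integrable so the interchanges are justified. I would assume bounded or integrable $Y$ to settle the latter.
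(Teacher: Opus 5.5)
Your proposal is correct and follows the paper's own argument. The paper specializes Theorem~\ref{thm:deployment_gap} to $\kappa(\cdot\mid x,c)=\mu(\cdot\mid x,c)$, so that $\Delta_\kappa\equiv 0$ on the support of $\pi_C$; your tower-property check is exactly Lemma~\ref{lem:code_mixture}, and your use of code-level overlap to ensure $\mu(\cdot\mid x,c)$ is defined wherever $\pi_C(c\mid x)>0$ is a careful version of what the paper gets from Assumption~A3.
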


Corollary~\ref{cor:value_equivalence} makes code-level policy learning a valid deployment target: a policy can be optimized and evaluated in the simpler code space, and \emph{the same improvement value is realized when the policy is lifted back to the original structured action space}. In particular, any code-level improvement transfers directly to a deployed improvement over the behavior policy in the original action space.

\begin{corollary}[Improvement transfer]
\label{cor:improvement_transfer}
Under the conditions of Corollary~\ref{cor:value_equivalence}: if $V_C(\pi_C) \geq V_C(\mu_C)$, then $V(\pi^{\uparrow}) \geq V(\mu)$.
\end{corollary}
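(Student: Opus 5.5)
The plan is to reduce both sides of the inequality to code-level values and then invoke Corollary~\ref{cor:value_equivalence} twice: once for the learned code policy $\pi_C$, and once for the behavior code policy $\mu_C$ viewed as a code policy in its own right. The crucial observation is that lifting $\mu_C$ by behavior-consistent lifting returns exactly the behavior policy $\mu$.

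\textbf{Step 1: $\mu_C$ lifts to $\mu$.} For the code policy $\mu_C$, the deployed policy from Eq.~\eqref{eq:pi_up_def} is
\[
\mu_C^{\uparrow}(a\mid x)=\sum_{c=1}^K \mu_C(c\mid x)\,\mu(a\mid x,c).
\]
Since $C=g_\phi(A)$ is a deterministic function of $A$, this sum is the law of total probability for $A\mid X=x$. Hence $\mu_C^{\uparrow}(\cdot\mid x)=\mu(\cdot\mid x)$ for $P_X$-almost all $x$. Codes with $\mu_C(c\mid x)=0$ contribute nothing, so the undefined conditionals there do not matter.

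\textbf{Step 2: apply value equivalence to $\mu_C$.} Code-level overlap for $\mu_C$ holds trivially, since its support is by definition contained in that of itself. Corollary~\ref{cor:value_equivalence} then gives $V(\mu)=V(\mu_C^{\uparrow})=V_C(\mu_C)$.

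As a sanity check, one can verify this directly without the corollary:
\[
V_C(\mu_C)=\mathbb E\Big[\sum_c \mu_C(c\mid X)\,m_C(X,c)\Big]=\mathbb E\big[\mathbb E[Y\mid X]\big]=\mathbb E[Y]=V(\mu).
\]
This uses the tower property together with consistency.

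\textbf{Step 3: apply value equivalence to $\pi_C$ and conclude.} Under the stated conditions (consistency, ignorability, and code-level overlap of $\pi_C$), Corollary~\ref{cor:value_equivalence} gives $V(\pi^{\uparrow})=V_C(\pi_C)$. Chaining the three facts,
\[
V(\pi^{\uparrow})=V_C(\pi_C)\ \ge\ V_C(\mu_C)=V(\mu),
\]
which proves the claim.

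\textbf{Main obstacle.} The only delicate point is Step 1: making precise that the within-code behavior distribution $\mu(\cdot\mid x,c)$ is well defined exactly where $\mu_C(c\mid x)>0$, and that $m_C(x,c)=\mathbb E_{A\sim\mu(\cdot\mid x,c)}[m(x,A)]$ there. The latter follows from ignorability and the fact that $C$ is $A$-measurable. Everything else is a direct invocation of Corollary~\ref{cor:value_equivalence}.
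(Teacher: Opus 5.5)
Your proof is correct and matches the paper's argument: apply Corollary~\ref{cor:value_equivalence} to $\mu_C$, using $\mu(a\mid x)=\sum_c \mu_C(c\mid x)\,\mu(a\mid x,c)$, to get $V(\mu)=V_C(\mu_C)$; apply it again to $\pi_C$; then chain the two equalities with the assumed inequality. Your direct tower-property check $V_C(\mu_C)=\mathbb E[Y]=V(\mu)$ is a useful addition, because it avoids having to ask whether $\mu_C$ itself satisfies the overlap condition, in particular the $\tau$-threshold version.
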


Behavior-consistent lifting also has bounded expected sampling costs. For a target code $c^\star$, the number of behavior-system samples required for acceptance is controlled by the code propensity $\mu_C(c^\star\mid x)$. Therefore, the overlap threshold $\tau$ ensures code-level support while also bounding the expected number of behavior-system samples required in expectation, with $T_{\max}$ providing an additional safeguard on worst-case cost.

\begin{lemma}[Sampling cost]
\label{lem:acceptance_rate}
For a target code $c^\star$ at context $x$, the expected number of draws from $\mu(\cdot\mid x)$ until $g_\phi(a) = c^\star$ is $1/\mu_C(c^\star\mid x)$. For $\pi_C \in \Pi_{C,\tau}$, this is bounded by $1/\tau$.
\end{lemma}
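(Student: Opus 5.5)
The plan is to model the acceptance loop of Stage~3 in its uncapped form as a sequence of independent draws $a_1,a_2,\ldots\stackrel{\text{i.i.d.}}{\sim}\mu(\cdot\mid x)$. We then reduce the number of draws to a geometric random variable. For each draw, define the indicator $B_t := \1\{g_\phi(a_t)=c^\star\}$.

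The first step is to identify the success probability. Because $g_\phi$ is a deterministic map of the action, the code $C=g_\phi(A)$ has conditional law $\Pr(C=c\mid X=x)=\mu(\{a:g_\phi(a)=c\}\mid x)$. This is exactly $\mu_C(c\mid x)$ as defined in Stage~1, so each $B_t$ is Bernoulli with parameter $p:=\mu_C(c^\star\mid x)$. The $B_t$ are independent because the draws are. The stopping time $T:=\min\{t:B_t=1\}$ therefore satisfies $\Pr(T=t)=(1-p)^{t-1}p$, i.e.\ $T$ is geometric on $\{1,2,\ldots\}$. The standard computation $\mathbb E[T]=\sum_{t\ge1}\Pr(T\ge t)=\sum_{t\ge1}(1-p)^{t-1}=1/p$ then gives the first claim. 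If $p=0$, then $T=\infty$ almost surely and the formula holds with the convention $1/0=\infty$. The same independence argument also yields Proposition~\ref{prop:reject_lift}, but it is not needed here.

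The second step is the bound. Take $\pi_C\in\Pi_{C,\tau}$ and a target code $c^\star\sim\pi_C(\cdot\mid x)$. Almost surely $\pi_C(c^\star\mid x)>0$, so the definition of $\Pi_{C,\tau}$ in Eq.~\eqref{eq:Pi_C_tau} gives $\mu_C(c^\star\mid x)\ge\tau>0$. Hence $\mathbb E[T\mid c^\star]=1/\mu_C(c^\star\mid x)\le 1/\tau$. Averaging over $c^\star$ preserves this bound, and the capped version with $T_{\max}$ can only reduce the count further.

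No step is a genuine obstacle. The only point that needs care is the first step: we must note that $\mu_C$ is the pushforward of $\mu(\cdot\mid x)$ under $g_\phi$, which makes the per-draw acceptance probability exactly $\mu_C(c^\star\mid x)$. We must also handle the measure-zero event that a code with $\pi_C(c^\star\mid x)=0$ is selected.
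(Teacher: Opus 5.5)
Your proposal is correct and follows essentially the same route as the paper: both identify the per-draw acceptance probability as $\mu_C(c^\star\mid x)$, conclude that the stopping time $T$ is geometric with mean $1/\mu_C(c^\star\mid x)$, and use the defining constraint of $\Pi_{C,\tau}$ to obtain the $1/\tau$ bound. The paper additionally notes that the accepted action is distributed as $\mu(\cdot\mid x,c^\star)$, which is not needed for the cost claim, as you observe. Your extra points are sound refinements: the pushforward identification of $\mu_C$, the $p=0$ convention, and the remark on the capped loop (whose expected count, including the fallback draw, is $\sum_{t=1}^{T_{\max}+1}(1-p)^{t-1}\le 1/p$).
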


\subsection{Predictiveness and the attainable code-level value}
\label{sec:theory_predictiveness}

We next
lower-bound the attainable code-level policy value in terms of the 
predictive power of $C$. 
Let $\bar m(x) := \mathbb E[Y\mid X=x]$ denote the context-only regression. The baseline value of the behavior policy is $V(\mu) = \mathbb E[\bar m(X)]$. The best attainable code-level value under encoder $g_\phi$ is the oracle value
\begingroup
\addtolength{\abovedisplayskip}{-2pt}
\begin{align}
V_C^{\star}(\phi)
&:= \sup_{\pi_C}\ \mathbb E\!\left[\sum_{c=1}^K \pi_C(c\mid X)\,m_C(X,c)\right]
= \mathbb E\!\left[\max_{c:\,\mu_C(c\mid X)>0} m_C(X,c)\right],
\label{eq:V_oracle}
\end{align}
\endgroup
where the supremum is over code policies assigning mass only to codes with $\mu_C(c\mid x)>0$, and the second equality follows by linearity in $\pi_C(\cdot\mid x)$. We define the predictiveness signal of the encoder as the conditional variance of the code regression, i.e., i.e.,
$\mathrm{Sig}(\phi) := \mathbb E\!\left[\mathrm{Var}\!\left(m_C(X,C)\mid X\right)\right]$.
If $C$ is conditionally constant given $X$, then $m_C(X,C) = \bar m(X)$ almost surely and $\mathrm{Sig}(\phi) = 0$; in that case, we have $V_C^\star(\phi) = V(\mu)$, and no code-level policy can improve over the behavior policy.

\begin{theorem}[Predictiveness implies value potential]
\label{thm:sig}
Assume $|Y|\leq B$ a.s. for some $B>0$. Then, $\mathrm{Sig}(\phi)$ admits the equivalent characterizations
\begin{align}
\mathrm{Sig}(\phi)
&= \mathbb E\!\left[(m_C(X,C)-\bar m(X))^2\right]
= \mathbb E\!\left[(Y-\bar m(X))^2\right]-\mathbb E\!\left[(Y-m_C(X,C))^2\right],
\label{eq:sig_mse}
\end{align}
and the oracle value in Eq.~\eqref{eq:V_oracle} satisfies the lower bound
\begin{equation}
V_C^{\star}(\phi)\ \ge\ V(\mu)\ +\ \frac{1}{2B}\,\mathrm{Sig}(\phi).
\label{eq:oracle_lb}
\end{equation}
\end{theorem}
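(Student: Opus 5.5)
The plan is to treat the two claims separately. The identities in Eq.~\eqref{eq:sig_mse} follow from the tower property. The lower bound in Eq.~\eqref{eq:oracle_lb} follows from a pointwise inequality comparing the maximum of a bounded random variable with its variance.

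\textbf{Step 1 (identities).} Write $M := m_C(X,C)$. Because $C=g_\phi(A)$ and $\sigma(X,C)\supseteq\sigma(X)$, the tower property gives $\mathbb E[M\mid X] = \mathbb E[Y\mid X]=\bar m(X)$. Therefore $\mathrm{Var}(M\mid X)=\mathbb E[(M-\bar m(X))^2\mid X]$, and taking expectations yields the first equality. For the second, decompose
\[
Y-\bar m(X) = (Y-M) + (M-\bar m(X)).
\]
The cross term vanishes: $\mathbb E[(Y-M)(M-\bar m(X))]=0$, since $M-\bar m(X)$ is $(X,C)$-measurable and $\mathbb E[Y-M\mid X,C]=0$. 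This is the Pythagorean identity, and squaring and rearranging gives the claim. Finiteness of all terms follows from $|Y|\le B$.

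\textbf{Step 2 (pointwise bound).} Fix $x$ and let $S_x=\{c:\mu_C(c\mid x)>0\}$. Conditional on $X=x$, $M$ takes value $m_C(x,c)$ with probability $\mu_C(c\mid x)$, for $c\in S_x$. Let $M^\star(x):=\max_{c\in S_x} m_C(x,c)$. Then $M^\star(x)-M\ge 0$ a.s., and $|M-\bar m(x)|\le 2B$, because both $M$ and $\bar m$ lie in $[-B,B]$, being conditional expectations of $Y$. The key inequality I would aim for is
\[
\mathrm{Var}(M\mid X=x)=\mathbb E[(M-\bar m)^2\mid x]\le 2B\,\bigl(M^\star(x)-\bar m(x)\bigr),
\]
which rearranges to $M^\star(x)-\bar m(x)\ge \mathrm{Var}(M\mid X=x)/(2B)$.

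\textbf{Step 3 (proving and integrating the bound).} To prove the key inequality, bound one factor of the square: $(M-\bar m)^2\le |M-\bar m|\cdot 2B$. It then remains to show $\mathbb E|M-\bar m|\le M^\star-\bar m$, which is false in general. The fix is a sharper bound on the other factor. Since $M\in[-B,B]$ and $M\le M^\star$, write
\[
\mathbb E[(M-\bar m)^2]=\mathbb E[(M^\star-M)(M-\bar m)]\cdot(-1)+(M^\star-\bar m)\,\mathbb E[M-\bar m] + \ldots
\]
More cleanly, use the Bhatia--Davis-type bound. For $M\in[L,M^\star]$ with mean $\bar m$, one has $\mathrm{Var}(M)\le (M^\star-\bar m)(\bar m-L)$, which follows from $\mathbb E[(M^\star-M)(M-L)]\ge0$. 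Taking $L=-B$ gives $\bar m-L\le 2B$, and the key inequality follows. Integrating over $X$ gives
\[
\mathbb E[M^\star(X)] \ge \mathbb E[\bar m(X)] + \mathrm{Sig}(\phi)/(2B),
\]
and we conclude using Eq.~\eqref{eq:V_oracle} and $V(\mu)=\mathbb E[\bar m(X)]$.

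The main obstacle is Step~3: getting the constant $1/(2B)$ rather than a weaker one. The Bhatia--Davis argument with lower endpoint $-B$ is the route I would rely on. A minor point is the measurability of the maximizing code selection; $K$ is finite, so this is routine.
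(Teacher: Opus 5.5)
Your proposal is correct and follows the paper's proof. The identities come from the tower property and the law of total variance, and your orthogonal decomposition of $Y-\bar m(X)$ is the same identity. The bound comes from the Bhatia--Davis inequality obtained from $\mathbb E[(M^\star-M)(M-L)]\ge 0$: the paper takes $L=\min_{c:\mu_C(c\mid x)>0} m_C(x,c)$ and then uses $\bar m(x)-L\le 2B$, whereas you take $L=-B$ directly, which is equivalent.

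The abandoned attempt at the start of Step~3 (the $\mathbb E|M-\bar m|$ route and the unfinished display) should simply be deleted.
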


The second equality in Eq.~\eqref{eq:sig_mse} identifies $\mathrm{Sig}(\phi)$ as the reduction in optimal prediction error when predicting $Y$ from $(X,C)$ instead of from $X$ alone. Learning $g_\phi$ to make the induced code $C=g_\phi(A)$ predictive of $Y$, i.e., to reduce the MSE of predicting $Y$ from $(X,C)$, therefore increases $\mathrm{Sig}(\phi)$. The inequality in Eq.~\eqref{eq:oracle_lb} then translates this signal into attainable improvement over the behavior policy. $\Rightarrow$ \emph{This directly relates to the predictiveness term in the \textbf{Stage}~\circledblue{1} loss.}

Theorem~\ref{thm:sig} has a different focus from existing OPE analyses for action embeddings or clusters~\citep{saito2022off,saito2023off,peng2023offline}. Existing analyses characterize estimator bias, variance, or MSE under embedding-mediation, local-correctness, or within-group outcome-similarity conditions; here, we lower-bound the value attainable by code-level policy learning in terms of an MSE-reduction signal. 

\subsection{Overlap and RA estimation error}
\label{sec:theory_overlap}

Recall that code policy learning proceeds by first estimating the outcome regressor $\hat m(\cdot,c)$ and then producing a code-level policy by maximizing (or soft-maximizing) this function for a given input context. 
The central challenge for regression is extrapolation. The code regressor $\widehat m(\cdot, c)$ is fit on contexts drawn from $Q_c^\phi$, but the RA value estimator from Eq.~\eqref{eq:V_RA} integrates $\widehat m$ against the full marginal $P_X$ weighted by $\pi_C(\cdot\mid x)$. If $Q_c^\phi$ is far from $P_X$ for some code, learning $\widehat m(\cdot, c)$ requires extrapolating to context regions with few or no logged observations under code $c$. Let $e_c(x) := \widehat m(x, c) - m_C(x, c)$ denote the prediction error. To quantify how this extrapolation inflates the estimation error of $\widehat V_{\mathrm{RA}}$, we posit the following shift bound, which relates prediction error under $P_X$ to prediction error under $Q_c^\phi$:
\begin{equation}
\mathbb E_{P_X}[e_c(X)^2]\ \leq\ \mathbb E_{Q_c^{\phi}}[e_c(X)^2]\ +\ \lambda_{\mathrm{ov}}\,D(P_X, Q_c^{\phi}),
\label{eq:overlap_shift_assump}
\end{equation}
for some distance $D$ on distributions over $\mathcal X$ and a constant $\lambda_{\mathrm{ov}} > 0$.\footnote{Standard choices include Wasserstein-1 (under Lipschitzness of $e_c^2$) or kernel MMD (under RKHS control of $e_c^2$) \citep{johansson2022generalization}.} Recall $V_C(\pi_C)$ is the true value of $\pi_C$.
We can upper bound  
the RA estimation error into the prediction error on the code-conditional distribution, by an overlap penalty measuring the total distance between $P_X$ and the code-conditional distributions $Q_c^\phi$, and a standard sample-complexity term:
\begin{theorem}[RA estimation error under limited overlap]
\label{thm:ra}
Assume $|Y| \leq B$ a.s. and $|\widehat m(x, c)| \leq B$ for all $(x, c)$. Let $\widehat m$ be independent of the evaluation sample (e.g., via cross-fitting), let $\Pi_C$ be a finite code-policy class, and assume Eq.~\eqref{eq:overlap_shift_assump} holds. Then, with probability at least $1-\delta$, simultaneously for all $\pi_C \in \Pi_C$, it holds that
\begingroup
\fontsize{9.5}{11}\selectfont
\begin{equation}
\big|V_C(\pi_C) - \widehat V_{\mathrm{RA}}(\pi_C)\big|
\ \leq\  \sqrt{\sum_{c=1}^K \mathbb E_{Q_c^{\phi}}\!\big[e_c(X)^2\big]\ +\ \lambda_{\mathrm{ov}}\sum_{c=1}^K D(P_X, Q_c^{\phi})}
 +\ B\sqrt{\frac{2\log(2|\Pi_C|/\delta)}{n}}.
\label{eq:ra_bound_main}
\end{equation}
\endgroup
\end{theorem}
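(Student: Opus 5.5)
The plan is to split the error with the triangle inequality into a population bias term and a finite-sample term. Define the population RA functional
\[
\bar V_{\mathrm{RA}}(\pi_C) := \mathbb E_{P_X}\Big[\sum_{c=1}^K \pi_C(c\mid X)\,\widehat m(X,c)\Big],
\]
where the expectation is over a fresh $X\sim P_X$ with $\widehat m$ held fixed. Then
\[
|V_C(\pi_C)-\widehat V_{\mathrm{RA}}(\pi_C)| \le |V_C(\pi_C)-\bar V_{\mathrm{RA}}(\pi_C)| + |\bar V_{\mathrm{RA}}(\pi_C)-\widehat V_{\mathrm{RA}}(\pi_C)|.
\]
The first summand gives the square-root term in Eq.~\eqref{eq:ra_bound_main}, and the second gives the sample-complexity term.

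\emph{Bias term.} Since $V_C(\pi_C)=\mathbb E[\sum_c \pi_C(c\mid X)m_C(X,c)]$, the difference is $\mathbb E_{P_X}[\sum_c \pi_C(c\mid X)e_c(X)]$. Apply Cauchy--Schwarz pointwise in $x$ over $c$:
\[
\Big|\sum_c \pi_C(c\mid x)e_c(x)\Big| \le \Big(\sum_c \pi_C(c\mid x)^2\Big)^{1/2}\Big(\sum_c e_c(x)^2\Big)^{1/2} \le \Big(\sum_c e_c(x)^2\Big)^{1/2},
\]
using $\sum_c\pi_C^2\le \sum_c \pi_C = 1$. Jensen then gives $|V_C-\bar V_{\mathrm{RA}}|\le (\sum_c \mathbb E_{P_X}[e_c(X)^2])^{1/2}$. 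Finally, apply the shift bound Eq.~\eqref{eq:overlap_shift_assump} to each $c$ and sum inside the square root. This yields exactly the first term of Eq.~\eqref{eq:ra_bound_main}, uniformly in $\pi_C$.

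\emph{Concentration term.} Condition on $\widehat m$, which is independent of the evaluation sample by cross-fitting. For fixed $\pi_C$, $\widehat V_{\mathrm{RA}}(\pi_C)$ is an average of the i.i.d.\ variables $Z_i=\sum_c \pi_C(c\mid X_i)\widehat m(X_i,c)$, each lying in $[-B,B]$ because $|\widehat m|\le B$ and the weights $\pi_C(\cdot\mid X_i)$ form a convex combination. Each $Z_i$ has mean $\bar V_{\mathrm{RA}}(\pi_C)$. Hoeffding's inequality with range $2B$ gives
\[
\Pr\big(|\widehat V_{\mathrm{RA}}-\bar V_{\mathrm{RA}}|>t\big)\le 2\exp\!\big(-nt^2/(2B^2)\big).
\]
Set this equal to $\delta/|\Pi_C|$ and take a union bound over the finite class. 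This gives, with probability at least $1-\delta$ simultaneously for all $\pi_C$, deviation at most $B\sqrt{2\log(2|\Pi_C|/\delta)/n}$. The conditional statement then holds unconditionally by integrating over $\widehat m$.

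\emph{Main obstacle.} The delicate point is the bias step. Naively, the error mass is spread across codes with $x$-dependent weights, and bounding each code separately would add a factor of $\sqrt K$. Bounding $\sum_c \pi_C^2\le 1$ before integrating, rather than using a per-code triangle inequality, avoids this factor. It also places the sum over $c$ inside the square root, matching the stated bound.

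The other thing to verify is that the assumption $|Y|\le B$ is not actually needed for Hoeffding. Only $|\widehat m|\le B$ enters there, which is what makes the evaluation-sample variables bounded. The assumption on $Y$ matters only in that it makes $m_C$ bounded, so $e_c$ is square-integrable and the shift bound is meaningful.
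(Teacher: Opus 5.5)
Your proposal is correct and follows the paper's proof: the same triangle-inequality split into a bias term (pointwise Cauchy--Schwarz, Jensen, then the shift bound summed over codes inside the square root) and a fluctuation term from Hoeffding with range $2B$ plus a union bound over $\Pi_C$. The only cosmetic difference is in the Cauchy--Schwarz step. You use the unweighted form with $\sum_c \pi_C(c\mid x)^2\le 1$, whereas the paper uses the $\pi_C$-weighted form $\sqrt{\sum_c \pi_C(c\mid x)\,e_c(x)^2}$ and then drops the weights via $\pi_C(c\mid x)\le 1$; both yield the identical bound.
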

Note the overlap penalty is propensity-free: $D(P_X, Q_c^\phi)$ depends only on the empirical distributions of $X$ and $X\mid C=c$, so it can be minimized directly during representation learning. This justifies the second term of the \textbf{Stage}~\circledblue{1} loss. Theorem~\ref{thm:ra} is unlike prior work on OPE under limited overlap, which focuses on variance inflation of importance-weighted estimators~\citep{crump2009dealing, Dudik.2014, farajtabar2018more, Swaminathan.2015, voloshin2021empirical}; in contrast, our theorem characterizes RA extrapolation error and adapts the balanced-representation bound~\citep{johansson2016learning, shalit2017estimating} to learned discrete codes and code-level policy value.

\section{Neural implementation}
\label{sec:implementation}

We describe a neural implementation of \framework.
We provide a full pseudocode with implementation choices in Appendix~\ref{app:implementation} in Alg.~\ref{alg:policylift_impl_long}.

\textbf{Stage}~\circledblue{1}\textbf{: Learning action codes.} For learning the encoder $g_\phi$, we use a two-branch network. The \emph{action branch} maps $A$ to logits over $\{1,\dots,K\}$ and applies a Gumbel--Softmax layer~\citep{jang2017categorical} to obtain a differentiable code vector $\tilde C$ during training; at inference, we take the arg-max to obtain discrete codes. The \emph{context branch} produces an embedding $\psi(X)$, which we concatenate with $\tilde C$ and feed into an outcome predictor $f_\theta$ trained with MSE. For the overlap distance $\widehat D$, we use the sliced Wasserstein distance~\citep{kolouri2019generalized} on minibatches of $\psi(X)$, comparing the pooled distribution to each within-code distribution $\psi(X) \mid \tilde C = c$; we stop gradients through $\psi$ in this term, so that the overlap penalty adjusts code assignments via $\phi$ rather than by distorting the context representation. Because the overlap term is minimized trivially by collapsing all actions into a single code, we add a marginal code-balance penalty $\gamma\,\mathrm{KL}(\widehat p_C \,\|\, \mathrm{Unif})$, where $\widehat p_C$ is the empirical code histogram in the minibatch. The total Stage~1 loss is thus $\mathcal L_{\text{pred}} + \beta\mathcal L_{\text{ov}} + \gamma\mathcal L_{\text{cb}}$.

\textbf{Stage}~\circledblue{2}\textbf{: Learning a code-level policy.} To fit the code-level regression $\widehat m(x, c)$, we initialize the predictor $f_\theta$ from Stage~1 and fine-tune the prediction head on $(X_i, C_i, Y_i)$ with the encoder frozen, reusing the Stage~1 representation while adapting the head to hard codes. We then define the code-level policy by $\pi_C(x) \in \arg\max_{c:\,\widehat\mu_C(c\mid x)\ge \tau} \widehat m(x,c)$, where $\widehat\mu_C(c\mid x)$ is a propensity proxy fit by cross-entropy on $(X_i,C_i)$. Using the estimated propensity, this approximates the feasible-class constraint $\Pi_{C,\tau}$ while optimizing the regression-adjusted objective in Eq.~\eqref{eq:V_RA}.

\textbf{Stage}~\circledblue{3}\textbf{: Deployment.} For deployment at context $x$, we select the target code greedily as $c^\star = \pi_C(x)$. We then run the acceptance loop of Algorithm~\ref{alg:policylift} with budget $T_{\max}$. Since the policy optimization already restricts to supported codes satisfying $\widehat\mu_C(c^\star\mid x)\ge \tau$, no additional overlap fallback is required at deployment. The cap $T_{\max}$ still controls worst-case deployment cost and limits deviation from the behavior policy when matching target codes is difficult.

\section{Experiments}
\label{sec:experiments}

We evaluate \framework in three steps. \circledgreen{1}~We use synthetic experiments to test the theoretical properties of behavior-consistent lifting, outcome-predictive code learning, and overlap-aware code learning. \circledgreen{2}~To show the modularity of \framework, we evaluate the policy improvement across three datasets while varying the Stage~\circledblue{1} code and Stage~\circledblue{2} policy learning objective. \circledgreen{3}~We assess practical applicability on a challenging, realistic, public language-generation benchmark and compare against prompt-based methods with additional information.

\textbf{Datasets.}
For evaluation, we use three datasets that progress from isolated mechanisms to fully realistic generated text. (i) Our \emph{synthetic} DGP uses numeric structured actions and separately controls within-code reward variation, code predictiveness, and overlap (Appendix~\ref{app:synthetic_diagnostics}). (ii) We construct a \emph{headline generation} benchmark that combines real article contexts with GPT-5.4 mini-generated headline actions and a controlled reward function. We use MiniLM embeddings with constructed simulator features (Appendices~\ref{app:semisynth_construction} and~\ref{app:comparison_implementation}) to ensure that a learnable signal can be retrieved from the noisy data. (iii) We test on the public \emph{OfflinePrompts}~\citep{kiyohara2025off} benchmark that provides a MovieLens-based personalization benchmark with Mistral-generated text and a public reward simulator trained on real-world user responses. Here, \framework learns from context, generated text, and reward and does \textbf{not} require observing the generating prompt like other methods (Appendix~\ref{app:offlineprompts}).

\par
\begin{wrapfigure}{r}{0.75\textwidth}
\vspace{-8pt}
  \centering
  \captionsetup{skip=2pt}
  \includegraphics[width=\linewidth]
    {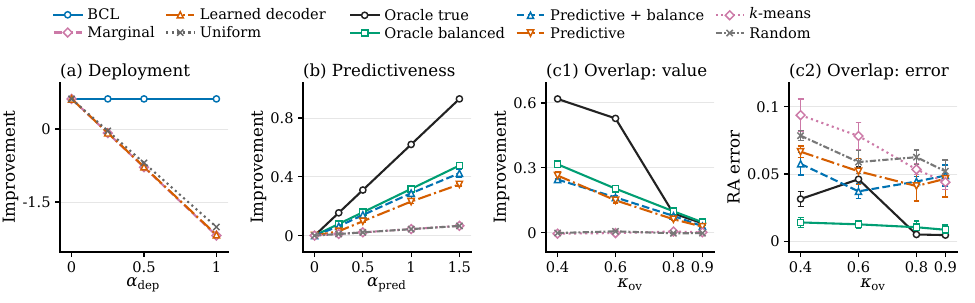}
  \caption{\textbf{Synthetic experiments.}
  (a) Deployment; (b) predictiveness; (c1--c2) overlap.
 (a,b,c1) Policy improvement (higher better);
  (c2) RA estimation error (lower better).
5 seeds, std errors  where available.
  Each diagnostic fixes other stages;
  see Appendix~\ref{app:synthetic_protocols}.}
  \label{fig:synthetic_diagnostics}
\end{wrapfigure}
\noindent
\textbf{Baseline.} Related methods \citep{saito2022off, saito2023off, saitopotec2025} are \emph{not directly applicable} due to important differences; these works  (a) have mostly used a large but finite action space, which prevents generating new actions (e.g., the within-cluster policy of PoTEC \citep{saitopotec2025}); or (b) make additional assumptions such as access to an observed discrete action cluster (like discrete prompt strategies \citep{kiyohara2025off}) or are OPE methods rather than OPL \citep{saito2022off, saito2023off}. However, to demonstrate the modularity of our framework, we show we can integrate the aforementioned methods into \framework for extending them to \emph{generative policy learning}.
We proceed as follows: For Stage~1, we replace the codes learned with our full objective with random codes, $k$-means codes, or predictive-only codes ($\beta=\gamma=0$), while keeping RA and behavior-consistent lifting. For Stage~2, we retain the proposed codes but use code-IPS, code-DR, or kernel-IS objectives to directly adapt the methods from OPE MIPS \citep{saito2022off}, OPE OffCEM \citep{saito2023off}, and sentence-level importance weighting \citep{kiyohara2025off}. On OfflinePrompts (where \textit{additionally} prompt strategies are observed), we also compare with prompt-based regression and DSO~\citep{kiyohara2025off}. These baseline can use prompt information that is \textit{unavailable} to \framework. Appendix~\ref{app:comparison_implementation} describes the implementations. Our primary metric is policy improvement over behavior, $V(\hat\pi^\uparrow)-V(\mu)$. Tables report mean $\pm$ standard error over five seeds.

\circledgreen{1}~\textbf{Validation of theoretical properties.}
Figure~\ref{fig:synthetic_diagnostics} isolates the following three mechanisms (see Appendix~\ref{app:synthetic_protocols} for details): \textbf{(i)} To compare BCL against alternative decoding (uniform, context-free, and learned decoding), we vary within-code reward effects with oracle codes fixed. \textbf{(ii)} To isolate the effect of the code quality, we vary the predictive signal with oracle Stage~2 optimization. \textbf{(iii)} We vary logging-policy concentration to examine overlap and estimation error. \textbf{Findings:} \textbf{(i)} BCL preserves code-policy value and guarantees improvement while alternative decoders show the deployment gap, i.e., even \emph{value deterioration} (Figure~\ref{fig:synthetic_diagnostics}, (a)). \textbf{(ii)} Outcome-predictive codes increase the value, i.e., enabling larger policy improvement (Figure~\ref{fig:synthetic_diagnostics}, (b)). The overlap diagnostic demonstrates the tradeoff between representation quality (predictiveness) and reliable policy learning (stage 2 error) (Figure~\ref{fig:synthetic_diagnostics} (c)), showing the code value in theory for improvement (c1), and how consistently we can learn a good policy on these codes from finite data (c2).  $\Rightarrow$~\emph{The experiments demonstrate the importance of our three main modeling choices: predictive codes, support-constrained policy learning, and behavior-consistent lifting.}

\par
\begin{wraptable}{r}{0.70\textwidth}
\vspace{-6pt}
\centering
\captionsetup{position=top,skip=2pt}
\caption{\textbf{\framework ablations.}
Policy improvement ($\uparrow$); cells report mean $\pm$ SE
over 5 seeds. \greenlabel{}: proposed configuration.
\redlabel{}: mean $\pm$ SE intervals entirely below proposed.
\graylabel{}: overlapping intervals.}
\label{tab:policylift_modularity}

\scriptsize
\setlength{\tabcolsep}{2pt}
\renewcommand{\arraystretch}{1.05}

\begin{tabular}{@{}p{0.36\linewidth}ccc@{}}
\toprule
Variant & Synthetic & Headlines & \shortstack{Offline Prompts} \\
\midrule

\ablproposed{\textbf{  \shortstack{Proposed: \\ Pred.\&bal.\ codes + Code-RA}}}
& \ablproposed{\ablstat{0.1579}{0.0083}}
& \ablproposed{\ablstat{0.0482}{0.0095}}
& \ablproposed{\ablstat{0.1332}{0.0034}} \\

\midrule
\multicolumn{4}{@{}l}{\emph{Stage 1: code replacement}} \\

Random
& \ablworse{\ablstat{0.0049}{0.0040}}
& \ablsimilar{\ablstat{0.0400}{0.0084}}
& \ablworse{\ablstat{0.0116}{0.0019}} \\

$k$-means
& \ablworse{\ablstat{0.0015}{0.0040}}
& \ablworse{\ablstat{0.0219}{0.0000}}
& \ablworse{\ablstat{0.0902}{0.0036}} \\

Predictive only
& \ablsimilar{\ablstat{0.1547}{0.0076}}
& \ablworse{\ablstat{0.0059}{0.0026}}
& \ablsimilar{\ablstat{0.1498}{0.0280}} \\

\midrule
\multicolumn{4}{@{}l}{\emph{Stage 2: objective replacement}} \\

Code-IPS \citep{saito2022off}
& \ablsimilar{\ablstat{0.1437}{0.0115}}
& \ablsimilar{\ablstat{0.0522}{0.0058}}
& \ablworse{\ablstat{0.1031}{0.0053}} \\

Code-DR \citep{saito2023off}
& \ablsimilar{\ablstat{0.1334}{0.0257}}
& \ablsimilar{\ablstat{0.0529}{0.0063}}
& \ablworse{\ablstat{0.1088}{0.0030}} \\

Kernel-IS \citep{kiyohara2025off}
& \ablworse{\ablstat{0.1083}{0.0229}}
& \ablsimilar{\ablstat{0.0448}{0.0095}}
& \ablworse{\ablstat{0.1112}{0.0086}} \\

\bottomrule
\end{tabular}
\end{wraptable}
\noindent
\circledgreen{2}~\textbf{Modularity and policy improvement.} 
Table~\ref{tab:policylift_modularity} shows results for various configurations of \framework. Importantly, we show positive mean improvement for \emph{every} configuration of \framework on all three datasets. Hence, \framework is flexible and can be combined with various stage 1 and 2 alternatives to extend them for generative policy improvement. Between the configurations, we observe the following: \circledblue{1}~\textbf{Code learning:} Our proposed code learning improves over the behavior policy and achieves higher mean improvement than methods without predictiveness ($k$-means, random) across all datasets. Additional balancing improves on the synthetic and semi-synthetic datasets with challenging overlap, with an increase in improvement on the headline data from $0.0059$ to $0.0482$, and stays competitive for OfflinePrompts. \circledblue{2} \textbf{Policy learning:} With our proposed codes fixed, our proposed RA objective achieves the highest improvement on synthetic data and OfflinePrompts, and remains competitive with the alternative objectives on headlines. $\Rightarrow$ \emph{Our proposed configuration is a very strong default, and our \framework also can extend   alternative estimators to  produce effective generative policies.}

\par
\begin{wraptable}{R}{0.36\textwidth}
\centering
\captionsetup{position=top,skip=2pt}
\caption{\textbf{OfflinePrompts.}
Policy value ($\uparrow$), mean $\pm$ SE over five seeds.
\greenlabel{}: \framework. \redlabel{}: behavior.
\graylabel{}: references using additional prompt information
unavailable to \framework; their mean $\pm$ SE intervals
overlap its interval.}
\label{tab:offlineprompts_references}

\scriptsize
\setlength{\tabcolsep}{3pt}
\renewcommand{\arraystretch}{1.05}

\begin{tabular}{@{}lc@{}}
\toprule
Method & Policy value \\
\midrule

\rowcolor{red!9}
Behavior policy
& $0.1031\pm0.0000$ \\

\rowcolor{green!12}
\textbf{\framework}
& $0.2363\pm0.0034$ \\

\midrule
\multicolumn{2}{@{}l}{\emph{Information-advantaged references}} \\

\rowcolor{black!6}
Prompt-based regression
& $0.2266\pm0.0109$ \\

\rowcolor{black!6}
DSO-style prompt policy
& $0.2684\pm0.0458$ \\

\bottomrule
\end{tabular}
\end{wraptable}
\noindent
\circledgreen{3}~\textbf{Policy improvement without logged prompts.}
OfflinePrompts tests whether the framework remains effective on fully realistic generated text even beyond our carefully designed headline benchmark. Table~\ref{tab:offlineprompts_references} compares the same proposed configuration against methods that evaluate and select different prompts that generated the text explicitly; hence, \framework uses \emph{strictly less information} than the baselines. \framework increases the policy value from $0.1031$ to $0.2363$, a relative improvement of approximately $129\%$. Its mean exceeds prompt-based regression, and  compares similarly to DSO; however, DSO suffers from  higher variation across seeds.
$\Rightarrow$~\emph{\framework demonstrates clear policy improvement and strong performance also in challenging realistic settings and even despite access to less information.}

\textbf{Discussion.}
We prove theoretically and show empirically that \framework improves policies with structured actions by learning in a supported code space and deploying through behavior-consistent lifting, which removes the deployment gap.
Our framework has two main limitations: it requires sampling access to the behavior system at deployment time, and preserves the within-code behavior distribution but does not optimize among actions with the same code. Future work could study less conservative lifting rules, further improved code representations, and online deployed evaluations of rolled-out policies beyond public benchmarking, such as A/B testing in personalized online marketing. To avoid negative impacts, we emphasize that deployment must be done in a safe, ethical, and reliable manner.

\compactmaindisplaysfalse
\clearpage
\subsection*{Funding statement}
This work has been supported by the German Federal Ministry of Education and Research (Grant: 01IS24082).

\bibliography{bibliography}

\newpage
\appendix

\newpage
\section{Extended discussion and limitations}
\label{sec:discussion}
\framework addresses policy improvement with structured actions by learning a code-level policy and deploying it through behavior-consistent lifting. This separates policy learning from action generation: the policy is optimized in a supported code space, while deployment remains in the original action space by sampling from the behavior system. We show theoretically and empirically that this removes the deployment gap for behavior-consistent lifting and transfers code-level improvement to deployed improvement over the behavior policy.

The approach has different limitations. First, \framework requires sampling access to the behavior system at deployment time. It does not apply when the logging policy cannot be queried or when deployment must use a different generator. Second, behavior-consistent lifting preserves the within-code behavior distribution. This gives an identified deployment rule, but it does not optimize among actions with the same code. If reliable within-code outcome scores or model-weight access are available, less conservative instantiation rules may improve further, but they require additional assumptions to avoid a deployment gap. Third, the method depends on learning codes that are both predictive of outcomes and sufficiently supported. If no code representation satisfies this predictiveness--overlap trade-off (e.g. due to a close-to-deterministic behavior system), the attainable improvement is limited. All three represent interesting directions for future work, as does 
evaluating our framework in real-world settings, e.g., via A/B testing in personalized online marketing. In such applications, our framework could improve personalization and decision-making in generative systems, though care should be taken to ensure that optimization objectives remain aligned with user well-being and broader platform goals.

\newpage

\section{Extended Related Work}\label{app:related_work}
\textbf{LLMs and generative alignment.}
Several adjacent literatures improve policies that can produce or select complex outputs, but rely on different information or deployment assumptions.
RLHF and direct preference optimization fine-tune generative models from preference data and require access to model weights~\citep{ouyang2022training, rafailov2023direct}.
Controlled decoding steers a frozen generator at inference time, but requires decoding-time access and an external reward, value, or attribute scorer~\citep{mudgal2024controlled, yang2021fudge}.
Inference-time reranking and Best-of-$N$ sampling select among generated candidates using a learned reward model, typically trained from preference data~\citep{stiennon2020learning, nakano2021webgpt, gao2023scaling}.
Plain Best-of-$N$ can overoptimize such proxy rewards as the candidate set grows~\citep{gao2023scaling}; recent regularized variants mitigate this effect, but still optimize in the original structured action space using reward-model scores~\citep{gui2024bonbon, huang2025best, jinnai2025regularized}.
In contrast, \framework starts only from logged observational triples $(X,A,Y)$: it does not assume preference data, reward-model supervision, model-weight access, action-level likelihoods, or a scorer that can evaluate arbitrary generated candidates.

\textbf{Offline reinforcement learning}.
Offline RL learns from logged data using support constraints, pessimism, or behavior regularization~\citep{fujimoto2019off, kumar2020conservative, kostrikov2022offline}, but typically operates directly in the original action space rather than learning a deployable representation-level policy. In a similar setting, deep generative policy-learning methods use generative models and latent action decoders for policies ~\citep{chen2024deep, zhou2021plas}, but require training such models and do not rule out a deployment gap when latent actions are decoded into original actions.

\textbf{Balanced representation learning in causal inference}. Related work on balanced representations for treatment effect estimation uses distributional distance regularization on covariate representations to adjust for covariate shift between treatment groups~\citep{johansson2016learning, shalit2017estimating, johansson2022generalization}, but assumes discrete treatments. \citet{harada2021graphite} extend this balancing to high-dimensional treatments. However, these works consider only treatment effect estimation and not policy learning. Relatedly, \citet{shi2025leveraging} learn representations of high-dimensional treatments for causal effect estimation and adaptive online experimentation, while our focus is offline policy learning from fixed observational data.

\begin{table}[h]
\centering
\footnotesize
\caption{\textbf{Differences to selected related streams.}
We compare \framework to the most relevant lines of work along the two challenges emphasized in this paper: overlap in structured action spaces and deployment of a representation-level policy as a policy over original structured actions.}
\label{tab:related_work_extended}
\begin{tabularx}{\linewidth}{@{}>{\raggedright\arraybackslash}Xccccc@{}}
\toprule
& \makecell{\textbf{Obs.}\\ \textbf{outcomes}}
& \makecell{\textbf{Structured}\\ \textbf{actions}}
& \makecell{\textbf{Learned}\\ \textbf{repr.}}
& \makecell{\textbf{Overlap}\\ \textbf{controlled}}
& \makecell{\textbf{Deploys in}\\ \textbf{orig. space}} \\
\midrule
Standard OPE/OPL~\citep{Dudik.2014, Swaminathan.2015, kallus.2021}
& \greencheck & \redcross & -- & (\redcross) & -- \\
Action embeddings / clusters~\citep{saito2022off, saito2023off, saitopotec2025}
& \greencheck & \greencheck & (\greencheck) & (\redcross) & \redcross \\
Learned action representations~\citep{peng2023offline, cief2024learning}
& \greencheck & \greencheck & \greencheck & (\redcross) & \redcross \\
Prompt-guided generation~\citep{kiyohara2025off}
& \greencheck & \greencheck & \redcross & (\redcross) & (\greencheck) \\
Decoder-based offline RL~\citep{fujimoto2019off, zhou2021plas}
& \greencheck & (\greencheck) & \greencheck & (\greencheck) & (\greencheck) \\
Inference-time reranking / Best-of-$N$~\citep{gao2023scaling, gui2024bonbon, huang2025best}
& \redcross & \greencheck & \redcross & -- & \greencheck \\
\midrule
\textbf{\framework (ours)}
& \greencheck & \greencheck & \greencheck & \greencheck & \greencheck \\
\bottomrule
\end{tabularx}

\vspace{0.25em}
\begin{minipage}{0.93\linewidth}
\footnotesize
\emph{Legend.} \greencheck = addressed; \redcross = not addressed; 
(\greencheck) = partially addressed; (\redcross) = assumed or not enforced; -- = not applicable. 
``Overlap controlled'' means that overlap is encouraged or enforced by the learning procedure rather than only assumed for identification. 
``Deploys in original space'' means that the learned policy is deployable as a policy over original structured actions.
\end{minipage}
\end{table}

\textbf{Research gap.} 
Existing methods address only parts of our setting: overlap-constrained OPL targets simpler action spaces, representation-based OPE/OPL typically assumes fixed actions and representation-level overlap, and generative alignment or offline RL requires stronger access such as preference data, reward models, model weights, or learned generators.
This leaves open generative policy improvement from observational data.
\framework addresses this gap by learning overlap-aware action codes, optimizing a code-level policy, and lifting the policy through the behavior system.

\newpage

\section{Proofs and Extensions}
\label{app:proofs}

\subsection{A technical assumption on within-code conditionals}
\label{app:conditional}

For the results below, we assume existence of regular conditional distributions for the within-code behavior.

\paragraph{Assumption A3 (Within-code behavior conditional).}
For each $x \in \mathcal X$ and $c \in \{1,\dots,K\}$ with $\mu_C(c \mid x) > 0$, there exists a conditional distribution $\mu(\cdot \mid x, c)$ such that
\begin{equation}
\mathbb P(A \in S \mid X = x, C = c) = \int_S \mu(a \mid x, c)\,\mathrm da
\end{equation}
for all measurable $S \subseteq \mathcal A$ (interpreting $\mathrm da$ generically; in discrete $\mathcal A$ replace by sums).

\subsection{Deployment gap and context-free decoding}

\label{app:proof_decoding}

\begin{proof}[Proof of Theorem~\ref{thm:deployment_gap}]
Let $\tilde A \sim \pi^\kappa(\cdot \mid X)$. By consistency and ignorability,
\begin{align}
V(\pi^\kappa)
&= \mathbb E_X\!\left[\mathbb E[Y(\tilde A) \mid X]\right]
= \mathbb E_X\!\left[\sum_{a \in \mathcal A}\pi^\kappa(a \mid X)\,m(X, a)\right] \\
&= \mathbb E_X\!\left[\sum_{c=1}^K \pi_C(c \mid X)\sum_{a \in \mathcal A}\kappa(a \mid X, c)\,m(X, a)\right] \\
&= \mathbb E_X\!\left[\sum_{c=1}^K \pi_C(c \mid X)\,\mathbb E_{A \sim \kappa(\cdot \mid X, c)}[m(X, A)]\right].
\end{align}
Subtracting and adding $m_C(X, c) = \mathbb E_{A \sim \mu(\cdot \mid X, c)}[m(X, A)]$, which holds by Lemma~\ref{lem:code_mixture} below, yields
\begin{align}
V(\pi^\kappa)
&= \mathbb E_X\!\left[\sum_{c=1}^K \pi_C(c \mid X)\,m_C(X, c)\right]
+ \mathbb E_X\!\left[\sum_{c=1}^K \pi_C(c \mid X)\,\Delta_\kappa(X, c)\right] \\
&= V_C(\pi_C) + \mathbb E_X\!\left[\sum_{c=1}^K \pi_C(c \mid X)\,\Delta_\kappa(X, c)\right],
\end{align}
which is~\eqref{eq:deployment_gap}. The two claims on when $\Delta_\kappa(x, c) = 0$ are immediate from the definition of $\Delta_\kappa$.\end{proof}

\begin{corollary}[Context-free decoding can yield a nonzero deployment gap under perfect marginal fit]
\label{cor:naive_gap_marginal_fit}
Suppose the decoder is context-free, $\kappa(\cdot \mid x, c) \equiv \kappa_0(\cdot \mid c)$, and matches the marginal within-code behavior distribution, $\kappa_0(\cdot \mid c) = \mu(\cdot \mid c) := \mathbb P(A \in \cdot \mid C = c)$. Then
\begin{equation}
V(\pi^{\kappa_0}) - V_C(\pi_C)
= \mathbb E_X\!\left[\sum_{c=1}^K \pi_C(c \mid X)\int_{\mathcal A} m(X, a)\,\big(\mu(\mathrm da \mid c) - \mu(\mathrm da \mid X, c)\big)\right].
\label{eq:naive_gap_marginal_fit}
\end{equation}
Consequently, perfect marginal fit does not guarantee a zero deployment gap.
\end{corollary}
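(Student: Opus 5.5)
The plan is to specialize the decomposition of Theorem~\ref{thm:deployment_gap} to the context-free rule $\kappa(\cdot\mid x,c)=\mu(\cdot\mid c)$. The only remaining work is to rewrite $\Delta_{\kappa_0}$ as an integral against the signed measure $\mu(\mathrm da\mid c)-\mu(\mathrm da\mid x,c)$. After that, we exhibit a simple instance in which this quantity does not vanish.

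\textbf{Step 1: specialize the gap.} By Theorem~\ref{thm:deployment_gap}, $V(\pi^{\kappa_0})-V_C(\pi_C)=\mathbb E_X[\sum_c \pi_C(c\mid X)\Delta_{\kappa_0}(X,c)]$. Here
\[
\Delta_{\kappa_0}(x,c)=\int m(x,a)\,\mu(\mathrm da\mid c)-\int m(x,a)\,\mu(\mathrm da\mid x,c).
\]
The second integral equals $m_C(x,c)$ by Lemma~\ref{lem:code_mixture}, which Theorem~\ref{thm:deployment_gap} already uses. The within-code conditional exists by Assumption~A3 whenever $\mu_C(c\mid x)>0$. We therefore restrict attention to $\pi_C$ satisfying code-level overlap, or adopt the convention that terms with $\pi_C(c\mid x)=0$ vanish. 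Combining the two integrals by linearity gives \eqref{eq:naive_gap_marginal_fit}.

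\textbf{Integrability.} For the combination to be legitimate, $\int |m(x,a)|\,\mu(\mathrm da\mid c)<\infty$ must hold. This is automatic if $|Y|\le B$, since then $|m|\le B$. Otherwise I would add an integrability assumption on $m$. A related subtlety is that $\mu(\cdot\mid c)$ may place mass on actions outside the support of $\mu(\cdot\mid x)$. There $m(x,a)$ is still defined as a potential-outcome regression under ignorability, but it is not identified from the data. The displayed identity remains true as a population statement, so this does not block the argument.

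\textbf{Step 2: ``consequently''.} To show that perfect marginal fit does not force a zero gap, I would build a toy example. Take $\mathcal X=\{x_1,x_2\}$ with equal probability, a single code $K=1$, and $\mathcal A=\{a_1,a_2\}$. Let $\mu(a_1\mid x_1)=1$ and $\mu(a_2\mid x_2)=1$. Set $m(x,a)=\mathbb 1\{a=a_1\}$ if $x=x_1$ and $m(x,a)=\mathbb 1\{a=a_2\}$ if $x=x_2$, so the behavior policy is optimal in each context.

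In this example, the marginal within-code distribution $\mu(\cdot\mid c)$ is uniform on $\{a_1,a_2\}$. This gives $V(\pi^{\kappa_0})=1/2$, while $V_C(\pi_C)=1$, so the gap equals $-1/2\neq 0$. The gap arises because $\mu(\cdot\mid c)\neq\mu(\cdot\mid x,c)$ and $m$ varies with $a$ in a context-dependent way.

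\textbf{Main obstacle.} The only delicate point is measure-theoretic. We need $\mu(\mathrm da\mid c)$ and $\mu(\mathrm da\mid x,c)$ to be well defined and the difference integrable, which Assumption~A3 together with boundedness handles. Everything else is direct substitution.
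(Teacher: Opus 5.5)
Your proposal is correct and follows the paper's proof: specialize Theorem~\ref{thm:deployment_gap} to $\kappa(\cdot\mid x,c)=\mu(\cdot\mid c)$ and write $\Delta_{\kappa_0}$ as an integral against $\mu(\mathrm da\mid c)-\mu(\mathrm da\mid x,c)$. Your two-context, single-code example with gap $-1/2$ is a useful addition, since the paper states the ``consequently'' claim without exhibiting an instance.
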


\begin{proof}
Apply Theorem~\ref{thm:deployment_gap} with $\kappa(\cdot \mid x, c) = \mu(\cdot \mid c)$. The deployment gap at $(x, c)$ simplifies to
\begin{align}
\Delta_{\kappa_0}(x, c)
&= \mathbb E_{A \sim \mu(\cdot \mid c)}[m(x, A)] - \mathbb E_{A \sim \mu(\cdot \mid x, c)}[m(x, A)] \\
&= \int_{\mathcal A} m(x, a)\,\big(\mu(\mathrm da \mid c) - \mu(\mathrm da \mid x, c)\big),
\end{align}
which gives~\eqref{eq:naive_gap_marginal_fit} after substitution into~\eqref{eq:deployment_gap}.
\end{proof}

Corollary~\ref{cor:naive_gap_marginal_fit} makes precise why context-free decoding can break alignment between code-level evaluation and deployed value: even perfect reconstruction of the \emph{marginal} within-code distribution averages across contexts, whereas deployment requires the \emph{context-specific} mixture $\mu(\cdot \mid x, c)$.

\begin{proposition}[Non-identifiability under support mismatch]
\label{prop:nonident_support_mismatch}
Fix a context-code pair $(x, c)$ with $\mu_C(c \mid x) > 0$. Suppose there exists an action $a^\star \in \mathcal A$ such that $g_\phi(a^\star) = c$ and
\begin{equation}
\kappa(a^\star \mid x, c) > 0 \qquad \text{but} \qquad \mu(a^\star \mid x, c) = 0.
\label{eq:support_mismatch}
\end{equation}
Then $\mathbb E_{A \sim \kappa(\cdot \mid x, c)}[m(x, A)]$ is not identified from the observational distribution under consistency and ignorability. The policy value $V(\pi^\kappa)$ is likewise not identified if such support mismatch occurs on a set of context-code pairs with positive probability under $P_X(\mathrm dx)\pi_C(c\mid x)$.
\end{proposition}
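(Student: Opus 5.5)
The plan is the standard two-worlds construction for non-identifiability. I will exhibit two joint distributions over $(X, A, Y(\cdot))$ that satisfy consistency and ignorability, induce the same observational law of $(X,A,Y)$, and give different values of $\mathbb E_{A\sim\kappa(\cdot\mid x,c)}[m(x,A)]$. Throughout, fix the context distribution $P_X$, the behavior policy $\mu(\cdot\mid x)$, and hence the code propensities $\mu_C$ and the within-code conditionals $\mu(\cdot\mid x,c)$ from Assumption A3. These are functionals of the observational law, so they agree in both worlds.

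The construction goes as follows.
\begin{itemize}
\item \textbf{World 1.} Take any potential-outcome model: $Y(a) = m(X,a) + \varepsilon$ with $\varepsilon$ independent of $(X,A)$. Ignorability and consistency then hold by construction.
\item \textbf{World 2.} Keep everything the same, but replace the regression by
\begin{equation}
m'(x',a) := m(x',a) + h\,\1\{x'=x,\ a=a^\star\},
\end{equation}
for some $h\neq 0$. In continuous $\mathcal A$ or $\mathcal X$, replace the point indicator by a perturbation on a set $S\times U$ that carries positive mass under $P_X(\mathrm dx')\,\kappa(\cdot\mid x',c)$ but zero mass under $P_X(\mathrm dx')\,\mu(\cdot\mid x',c)$. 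Such a set exists by the support-mismatch hypothesis~\eqref{eq:support_mismatch}, or by its measure-theoretic analogue.
\end{itemize}

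Next I check that the two worlds are observationally equivalent. The observed outcome is $Y = Y(A)$, with $A \sim \mu(\cdot\mid X)$. In World 2 the law of $Y$ given $(X,A)$ differs from World 1 only on the event $\{X=x, A=a^\star\}$ (or $S\times U$). Since $g_\phi(a^\star)=c$, this event has probability
\begin{equation}
P_X(\{x\})\,\mu_C(c\mid x)\,\mu(a^\star\mid x,c) = 0
\end{equation}
under the behavior policy. Hence the two joint laws of $(X,A,Y)$ coincide, and ignorability still holds in World 2 because the noise is unchanged.

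The target functionals then differ:
\begin{equation}
\mathbb E_{\kappa}[m'(x,A)] - \mathbb E_{\kappa}[m(x,A)] = h\,\kappa(a^\star\mid x,c) \neq 0.
\end{equation}
This proves the first claim. For the policy value, I use the decomposition in Theorem~\ref{thm:deployment_gap}. In this construction $V_C(\pi_C)$ is identical across worlds, because $m_C(x',c') = \mathbb E_{\mu(\cdot\mid x',c')}[m(x',A)]$ is unchanged: the perturbation has $\mu$-measure zero. Only the deployment gap changes, by
\begin{equation}
h\int \pi_C(c\mid x')\,\kappa(U\mid x',c)\,P_X(\mathrm dx').
\end{equation}
The mismatch set has positive $P_X(\mathrm dx)\,\pi_C(c\mid x)$ probability and $\kappa$ puts positive mass on the mismatched actions there, so this quantity is nonzero. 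Therefore $V(\pi^\kappa)$ takes two distinct values under the same observational law, and so it is not identified.

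The main obstacle is measure-theoretic. In a continuous action space with atomless $P_X$, a single point $(x,a^\star)$ has measure zero under $\kappa$ as well, so the perturbation must be placed on a set that is simultaneously $\mu$-null and $\kappa$-nonnull. I would state the hypothesis in the form "$\kappa(\cdot\mid x,c)\not\ll\mu(\cdot\mid x,c)$", take the singular part from the Lebesgue decomposition, and pick a measurable selection of the null set across $x$. If boundedness $|Y|\le B$ is required, I would also choose $h$ small enough that the perturbed potential outcomes stay within the bound.
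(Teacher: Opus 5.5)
Your proposal is correct and follows essentially the paper's own two-worlds argument: you perturb the mean of $Y(a^\star)$ at $x$, note that $\{X=x,\,A=a^\star\}$ is behavior-null because $g_\phi(a^\star)=c$ and $\mu(a^\star\mid x,c)=0$, conclude that the observational law and $m_C$ are unchanged while $\mathbb E_{A\sim\kappa(\cdot\mid x,c)}[m(x,A)]$ shifts by $h\,\kappa(a^\star\mid x,c)$, and your deployment-gap bookkeeping for $V(\pi^\kappa)$ just makes explicit what the paper asserts in one line. One slip in your continuous-space aside: a set that is null under $P_X(\mathrm dx')\,\mu(\cdot\mid x',c)$ but not under $P_X(\mathrm dx')\,\kappa(\cdot\mid x',c)$ need not be a product $S\times U$ (with $P_X$ and $\mu(\cdot\mid x',c)$ uniform on $[0,1]$ and $\kappa(\cdot\mid x',c)=\delta_{x'}$, no product set works while the diagonal does), so you should perturb on a general measurable subset of $\mathcal X\times g_\phi^{-1}(c)$, which is what your closing Lebesgue-decomposition remark effectively produces.
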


\begin{proof}
Construct two data-generating processes that share the same behavior policy $\mu$ and agree on all potential outcomes $\{Y(a) : a \neq a^\star\}$, but assign different values to $\mathbb E[Y(a^\star) \mid X = x]$. Because $\mu(a^\star \mid x, c) = 0$, the action $a^\star$ is never observed under the behavior distribution at $(x, c)$, so both processes induce the same observational distribution of $(X, A, Y)$ and the same $m_C(x, c)$. However, since $\kappa(a^\star \mid x, c) > 0$, the quantity $\mathbb E_{A \sim \kappa(\cdot \mid x, c)}[m(x, A)]$ differs between the two processes. Thus, if such support mismatches occur on a set of positive $P_X(\mathrm dx)\pi_C(c\mid x)$-measure, $V(\pi^\kappa)$ is not identified either.
\end{proof}

\subsection{Value equivalence and improvement transfer}
\label{app:proof_equivalence}

We begin with a lemma showing that the code regression function admits a causal interpretation as a within-code mixture of potential-outcome regressions. This is the key identity used in the proof of Theorem~\ref{thm:deployment_gap} and the corollaries that follow.

\begin{lemma}[Code regression as within-code mixture]
\label{lem:code_mixture}
The code regression function $m_C(x, c) := \mathbb E[Y \mid X = x, C = c]$ is defined as a statistical object under the observational distribution of $(X, C, Y)$. Under consistency and ignorability, and under Assumption~A3, it admits the causal interpretation
\begin{equation}
m_C(x, c) = \mathbb E_{A \sim \mu(\cdot \mid x, c)}\!\left[m(x, A)\right]
\label{eq:code_mixture}
\end{equation}
for any $(x, c)$ with $\mu_C(c \mid x) > 0$.
\end{lemma}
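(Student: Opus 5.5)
The plan is to prove Eq.~\eqref{eq:code_mixture} by the tower property, conditioning on the finer $\sigma$-algebra generated by $(X,A)$. The key structural fact is that $C=g_\phi(A)$ is a deterministic, measurable function of $A$, so $\sigma(X,C)\subseteq\sigma(X,A)$. Fix $(x,c)$ with $\mu_C(c\mid x)>0$. Then
\begin{equation*}
m_C(x,c)=\mathbb E[Y\mid X=x,C=c]=\mathbb E\big[\mathbb E[Y\mid X,A]\,\big|\,X=x,C=c\big]=\mathbb E\big[\tilde m(X,A)\mid X=x,C=c\big],
\end{equation*}
where $\tilde m(x,a):=\mathbb E[Y\mid X=x,A=a]$ is the observational regression.

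The second step converts the observational regression into a causal one. By consistency and ignorability, exactly as in the identification paragraph of Section~\ref{sec:setup}, $\tilde m(x,a)=\mathbb E[Y(a)\mid X=x,A=a]=\mathbb E[Y(a)\mid X=x]=m(x,a)$. This holds for $\mu(\cdot\mid x)$-almost every $a$, which is all the argument requires.

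The third step evaluates the outer conditional expectation. Given $X=x$ and $C=c$, the conditional law of $A$ is, by Assumption~A3, the within-code distribution $\mu(\cdot\mid x,c)$. Hence
\begin{equation*}
\mathbb E[m(X,A)\mid X=x,C=c]=\int m(x,a)\,\mu(a\mid x,c)\,\mathrm da=\mathbb E_{A\sim\mu(\cdot\mid x,c)}[m(x,A)].
\end{equation*}
In the discrete case this reads $\mu(a\mid x,c)=\mu(a\mid x)\1\{g_\phi(a)=c\}/\mu_C(c\mid x)$, and the identity is then just an elementary computation with sums.

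The main obstacle is measure-theoretic. Conditional expectations are defined only almost surely, so pointwise statements at a fixed $(x,c)$ need care. I would handle this by working with regular conditional distributions: Assumption~A3 supplies the one for $A$ given $(X,C)$, and a disintegration of $A$ given $X$ supplies the other. I would state the identity for $P_X$-almost all $x$ and every $c$ with $\mu_C(c\mid x)>0$. Two checks remain: the positivity condition $\mu_C(c\mid x)>0$ makes the within-code conditional well defined, and boundedness or integrability of $Y$ justifies the tower property. Ignorability is used only on the support of $\mu(\cdot\mid x)$, so no overlap assumption beyond code-level support is needed.
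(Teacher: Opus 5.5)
Your proposal is correct and follows essentially the same route as the paper. Both arguments apply the tower property by conditioning on $A$, which is valid because $C=g_\phi(A)$ is a function of $A$. Both then use consistency and ignorability to replace the inner regression by $m(x,A)$, and Assumption~A3 to identify the outer law as $\mu(\cdot\mid x,c)$. The differences are cosmetic: you condition on $A$ before invoking consistency rather than after, and you state the identity for $P_X$-almost every $x$, which is more careful than the paper's pointwise claim.
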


\begin{proof}
By definition and consistency,
\begin{align}
m_C(x, c)
= \mathbb E[Y \mid X = x, C = c]
= \mathbb E[Y(A) \mid X = x, C = c].
\end{align}
Applying the tower property by conditioning on $A$ and then using ignorability,
\begin{align}
\mathbb E[Y(A) \mid X = x, C = c]
&= \mathbb E_{A \mid X = x, C = c}\!\left[\mathbb E[Y(A) \mid X = x, A, C = c]\right] \\
&= \mathbb E_{A \mid X = x, C = c}\!\left[\mathbb E[Y(A) \mid X = x, A]\right] \\
&= \mathbb E_{A \mid X = x, C = c}[m(x, A)]
= \mathbb E_{A \sim \mu(\cdot \mid x, c)}[m(x, A)],
\end{align}
where the last equality uses Assumption~A3.
\end{proof}

\begin{proof}[Proof of Corollary~\ref{cor:value_equivalence}]
Apply Theorem~\ref{thm:deployment_gap} with $\kappa(\cdot \mid x, c) = \mu(\cdot \mid x, c)$. Then $\Delta_\kappa(x, c) = 0$ for all $(x, c)$ in the support of $\pi_C$, and~\eqref{eq:deployment_gap} reduces to $V(\pi^\uparrow) = V_C(\pi_C)$.
\end{proof}

\begin{proof}[Proof of Corollary~\ref{cor:improvement_transfer}]
Applying Corollary~\ref{cor:value_equivalence} with $\pi_C = \mu_C$ shows $V(\mu) = V_C(\mu_C)$, since $\mu(a \mid x) = \sum_{c=1}^K \mu_C(c \mid x)\,\mu(a \mid x, c)$. Combining with $V_C(\pi_C) \geq V_C(\mu_C)$ and Corollary~\ref{cor:value_equivalence} yields $V(\pi^\uparrow) = V_C(\pi_C) \geq V_C(\mu_C) = V(\mu)$.
\end{proof}

\subsection{Acceptance sampling}
\label{app:acceptance}
\begin{proposition}[Acceptance sampling implements the behavioral decoder]
\label{prop:reject_lift}
Fix a context $x$ and a target code $c^\star$ with $\mu_C(c^\star \mid x)>0$. Define the behavioral decoder
\begin{equation}
\kappa_\mu(S \mid x,c^\star)
:=
\mathbb P_\mu(A \in S \mid X=x, g_\phi(A)=c^\star)
=
\frac{
\int_S \mathbf 1\{g_\phi(a)=c^\star\}\,\mu(\mathrm da \mid x)
}{
\mu_C(c^\star \mid x)
}
\label{eq:behavioral_decoder}
\end{equation}
for all measurable $S\subseteq\mathcal A$. Let
$A^{(1)},A^{(2)},\ldots \overset{\mathrm{iid}}{\sim}\mu(\cdot \mid x)$ and define
\begin{equation}
T := \inf\{t\geq 1 : g_\phi(A^{(t)})=c^\star\},
\qquad
A^{\mathrm{RS}} := A^{(T)} .
\end{equation}
Then, for all measurable $S\subseteq\mathcal A$,
\begin{equation}
\mathbb P(A^{\mathrm{RS}}\in S \mid X=x; c^\star)
=
\kappa_\mu(S \mid x,c^\star).
\end{equation}
\end{proposition}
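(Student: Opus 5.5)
The plan is the standard first-success decomposition for rejection sampling. Write $p := \mu_C(c^\star\mid x) = \mu(\{a : g_\phi(a)=c^\star\}\mid x) > 0$ and let $E_t := \{g_\phi(A^{(t)}) = c^\star\}$. The events $E_t$ are i.i.d. with probability $p$ because the draws are i.i.d. from $\mu(\cdot\mid x)$ and $g_\phi$ is measurable. First I show that $T<\infty$ almost surely: $\mathbb P(T>t) = (1-p)^t \to 0$, so $A^{\mathrm{RS}}$ is well defined almost surely. The same computation shows that $T$ is geometric with mean $1/p$, which also yields Lemma~\ref{lem:acceptance_rate}.

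Next, for a fixed measurable $S\subseteq\mathcal A$, I partition on the value of $T$:
\begin{equation*}
\mathbb P(A^{\mathrm{RS}}\in S) = \sum_{t=1}^\infty \mathbb P\big(E_1^c,\dots,E_{t-1}^c,\ E_t,\ A^{(t)}\in S\big).
\end{equation*}
By independence of the draws, each summand factorizes as
\begin{equation*}
(1-p)^{t-1}\,\mathbb P\big(g_\phi(A^{(t)})=c^\star,\ A^{(t)}\in S\big) = (1-p)^{t-1}\int_S \mathbf 1\{g_\phi(a)=c^\star\}\,\mu(\mathrm da\mid x).
\end{equation*}
Summing the geometric series gives $\sum_{t\ge1}(1-p)^{t-1} = 1/p$. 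Hence
\begin{equation*}
\mathbb P(A^{\mathrm{RS}}\in S) = \frac{1}{p}\int_S \mathbf 1\{g_\phi(a)=c^\star\}\,\mu(\mathrm da\mid x),
\end{equation*}
which is exactly $\kappa_\mu(S\mid x,c^\star)$ in Eq.~\eqref{eq:behavioral_decoder}.

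Finally, I check that the second expression in Eq.~\eqref{eq:behavioral_decoder} agrees with the conditional probability $\mathbb P_\mu(A\in S\mid X=x, g_\phi(A)=c^\star)$. This is the elementary definition of conditioning on the positive-probability event $\{g_\phi(A)=c^\star\}$ under $\mu(\cdot\mid x)$. It coincides with $\mu(\cdot\mid x,c^\star)$ from Assumption~A3, so the sampler implements the behavior-consistent lifting of Eq.~\eqref{eq:pi_up_def}.

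There is no serious obstacle. The only points needing care are the almost-sure finiteness of $T$, which is why $p>0$ is required, and justifying the exchange of the countable sum with probability. The latter is just countable additivity over the disjoint events $\{T=t\}$.
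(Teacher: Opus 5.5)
Your proposal is correct and follows essentially the same argument as the paper: you partition on the first acceptance time $T=t$, factor each term by independence into $(1-p)^{t-1}\int_S \mathbf 1\{g_\phi(a)=c^\star\}\,\mu(\mathrm da\mid x)$, and sum the geometric series to obtain division by $\mu_C(c^\star\mid x)$. Your extra remarks on the almost-sure finiteness of $T$ and on identifying the ratio with the elementary conditional probability are sound additions but do not change the route.
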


\begin{proof}
Let $p_c(x):=\mu_C(c^\star\mid x)$ and
\begin{equation}
q_S(x,c^\star)
:=
\mathbb P_\mu(A\in S, g_\phi(A)=c^\star \mid X=x)
=
\int_S \mathbf 1\{g_\phi(a)=c^\star\}\,\mu(\mathrm da \mid x).
\end{equation}
Since $p_c(x)>0$, the stopping time $T$ is finite almost surely. By independence of the proposal draws,
\begin{align}
\mathbb P(A^{\mathrm{RS}}\in S \mid X=x; c^\star)
&=
\sum_{t=1}^{\infty}
\mathbb P(T=t, A^{(t)}\in S \mid X=x; c^\star) \\
&=
\sum_{t=1}^{\infty}
(1-p_c(x))^{t-1} q_S(x,c^\star) \\
&=
\frac{q_S(x,c^\star)}{p_c(x)} \\
&=
\frac{
\int_S \mathbf 1\{g_\phi(a)=c^\star\}\,\mu(\mathrm da \mid x)
}{
\mu_C(c^\star \mid x)
}
=
\kappa_\mu(S \mid x,c^\star),
\end{align}
where the last equality follows from~\eqref{eq:behavioral_decoder}.
\end{proof}

Fix a context $x$ and a target code $c^\star$ with $\mu_C(c^\star \mid x) > 0$. The ideal acceptance sampler draws $A^{(1)}, A^{(2)}, \ldots \overset{\text{iid}}{\sim} \mu(\cdot \mid x)$ and returns $A^{(T)}$, where $T := \min\{t \geq 1 : g_\phi(A^{(t)}) = c^\star\}$ is the first accepted index.

\begin{proof}[Proof of Lemma~\ref{lem:acceptance_rate}]
For any measurable $S \subseteq \mathcal A$,
\begin{align}
\mathbb P(A \in S, g_\phi(A) = c^\star \mid X = x)
&= \int_S \mu(a \mid x)\,\mathbf 1\{g_\phi(a) = c^\star\}\,\mathrm da, \\
\mathbb P(g_\phi(A) = c^\star \mid X = x)
&= \mu_C(c^\star \mid x).
\end{align}
Conditioning on acceptance yields $\mathbb P(A \in S \mid X = x, g_\phi(A) = c^\star) = \mu(S \mid x, c^\star)$, so the returned action has distribution $\mu(\cdot \mid x, c^\star)$. The acceptance probability per draw is $\mu_C(c^\star \mid x)$, and $T$ is geometric with mean $1/\mu_C(c^\star \mid x)$. On $\Pi_{C,\tau}$, the bound $\mu_C(c^\star \mid x) \geq \tau$ gives $\mathbb E[T \mid X = x, C = c^\star] \leq 1/\tau$.
\end{proof}

\paragraph{Finite-cap variant.}
In practice, the acceptance loop is capped at $T_{\max}$ draws, and we fall back to an independent draw from $\mu(\cdot \mid x)$. Let $p_{\mathrm{fail}}(x, c) := (1 - \mu_C(c \mid x))^{T_{\max}}$ be the probability of no acceptance within $T_{\max}$ draws. If $|Y| \leq B$ almost surely, then this fallback produces an $O(p_{\mathrm{fail}})$ deviation in deployed value:
\begin{equation}
\big|V(\pi^{\uparrow}_{T_{\max}}) - V(\pi^\uparrow)\big|\ \leq\ 2B\,\mathbb E\!\left[\sum_{c=1}^K \pi_C(c \mid X)\,p_{\mathrm{fail}}(X, c)\right],
\label{eq:finite_cap_bound}
\end{equation}
where $\pi^{\uparrow}_{T_{\max}}$ is the capped sampler's induced deployed policy. This quantifies the trade-off between computational cost (at most $T_{\max}+1$ behavior draws) and approximation error ($p_{\mathrm{fail}}$ decaying exponentially in $T_{\max}$).

\subsection{Stochastic encoder extension}
\label{app:stoch_encoder}

The main text uses a deterministic encoder $C = g_\phi(A)$ to match the neural implementation at inference time. For completeness, we show that the value equivalence result extends to stochastic encoders.

Suppose the encoder is stochastic: $C \sim g_\phi(\cdot \mid A)$. The induced code-level behavior policy is $\mu_C(c \mid x) := \mathbb P(C = c \mid X = x)$, and the within-code behavior conditional is $\mu(a \mid x, c) := \mathbb P(A = a \mid X = x, C = c)$, both taken under the joint distribution over $(A, C)$. Behavior-consistent lifting is defined exactly as in~\eqref{eq:pi_up_def}.

\begin{theorem}[Value equivalence with stochastic encoding]
\label{thm:stochastic_equivalence}
Under consistency and ignorability and under Assumption~A3 applied to the joint $(A, C)$, for any $\pi_C \in \Pi_{C,\tau}$ the behavior-consistent deployed policy satisfies $V(\pi^\uparrow) = V_C(\pi_C)$, where $m_C(x, c) := \mathbb E[Y \mid X = x, C = c]$ under the joint distribution.
\end{theorem}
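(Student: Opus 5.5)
The plan is to rerun the proof of Theorem~\ref{thm:deployment_gap} and Corollary~\ref{cor:value_equivalence} with the joint law of $(X,A,C,Y)$ in place of the deterministic coding. The key observation is that nothing in that argument used $C=g_\phi(A)$ except through the within-code mixture identity of Lemma~\ref{lem:code_mixture}. So the main work is to re-establish that identity when $C\sim g_\phi(\cdot\mid A)$ is drawn with independent encoder randomness. Throughout I will assume that the encoder noise is independent of the potential outcomes given $(X,A)$, i.e.\ $C\perp \{Y(a)\}_a \mid X,A$. This is natural because $g_\phi$ depends only on $A$ and exogenous noise, and I would state it explicitly as part of the setup.

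\textbf{Step 1 (mixture identity).} Fix $(x,c)$ with $\mu_C(c\mid x)>0$. By consistency, $m_C(x,c)=\mathbb E[Y(A)\mid X=x,C=c]$. Conditioning on $A$ via the tower property gives $\mathbb E_{A\mid X=x,C=c}\big[\mathbb E[Y(A)\mid X=x,A,C=c]\big]$.

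The inner expectation equals $\mathbb E[Y(A)\mid X=x,A]$ by the assumed conditional independence of $C$ from the potential outcomes given $(X,A)$. Ignorability then turns this into $m(x,A)$.

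Assumption~A3, applied to the joint $(A,C)$, identifies the outer law as $\mu(\cdot\mid x,c)$. Together these give $m_C(x,c)=\mathbb E_{A\sim\mu(\cdot\mid x,c)}[m(x,A)]$, exactly as in Lemma~\ref{lem:code_mixture}.

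\textbf{Step 2 (value computation).} Let $\tilde A\sim\pi^\uparrow(\cdot\mid X)=\sum_c\pi_C(c\mid X)\mu(\cdot\mid X,c)$. By consistency and ignorability, $V(\pi^\uparrow)=\mathbb E_X\big[\sum_c\pi_C(c\mid X)\,\mathbb E_{A\sim\mu(\cdot\mid X,c)}[m(X,A)]\big]$.

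Since $\pi_C\in\Pi_{C,\tau}$, every code with $\pi_C(c\mid x)>0$ has $\mu_C(c\mid x)\ge\tau>0$. Hence $\mu(\cdot\mid x,c)$ is well defined on the support of $\pi_C$, and Step 1 applies there. Substituting yields $V(\pi^\uparrow)=\mathbb E_X\big[\sum_c\pi_C(c\mid X)m_C(X,c)\big]=V_C(\pi_C)$; equivalently, the deployment gap $\Delta_\kappa$ vanishes for $\kappa=\mu(\cdot\mid x,c)$.

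As a sanity check, $\sum_c\mu_C(c\mid x)\mu(a\mid x,c)=\mu(a\mid x)$ still holds by marginalizing the joint law over $C$. Therefore Corollary~\ref{cor:improvement_transfer} carries over verbatim.

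\textbf{Main obstacle.} The delicate point is the middle equality in Step 1, dropping $C$ from the conditioning. In the deterministic case this is automatic because $C$ is a function of $A$. In the stochastic case it requires the encoder randomness to carry no information about the outcomes beyond $(X,A)$. I would make this explicit, noting that it holds whenever the encoder noise is drawn independently of everything else. A secondary measure-theoretic point is that conditioning on the event $\{C=c\}$ is legitimate because $\mu_C(c\mid x)>0$ on the support of $\pi_C$, which is precisely what Assumption~A3 on the joint supplies.
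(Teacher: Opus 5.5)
Your proposal is correct and follows the paper's route: re-derive Lemma~\ref{lem:code_mixture} for the joint law of $(A,C)$, then rerun the argument of Corollary~\ref{cor:value_equivalence}, so that $\Delta_\kappa\equiv 0$ for $\kappa=\mu(\cdot\mid x,c)$ on the support of $\pi_C$. The condition $C\perp\{Y(a)\}_a\mid X,A$ that you state explicitly is exactly what the paper's one-line proof leaves implicit in the step that drops $C$ from the inner conditioning; it holds automatically when the encoder noise is exogenous, as the notation $C\sim g_\phi(\cdot\mid A)$ suggests, so flagging it is a useful clarification rather than a change of approach.
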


\begin{proof}
Identical to the proof of Corollary~\ref{cor:value_equivalence}, using conditioning on $(X, C)$ and Lemma~\ref{lem:code_mixture} for the joint distribution over $(A, C)$.
\end{proof}

\subsection{Proof of Theorem~\ref{thm:sig}}
\label{app:proof_sig}

Recall $\bar m(x) = \mathbb E[Y \mid X = x]$ and $m_C(x, c) = \mathbb E[Y \mid X = x, C = c]$.

\paragraph{Equivalent characterizations of $\mathrm{Sig}(\phi)$.}
By the tower property applied to $C$ conditional on $X$,
\begin{equation}
\mathbb E_{C \mid X}[m_C(X, C)] = \mathbb E_{C \mid X}[\mathbb E[Y \mid X, C]] = \mathbb E[Y \mid X] = \bar m(X).
\end{equation}
Hence
\begin{equation}
\mathrm{Sig}(\phi) = \mathbb E_X\!\left[\mathrm{Var}_{C \mid X}(m_C(X, C))\right] = \mathbb E_{X, C}\!\left[(m_C(X, C) - \bar m(X))^2\right],
\end{equation}
which is the first equality in~\eqref{eq:sig_mse}.

For the second equality, note that
\begin{equation}
\mathbb E[(Y - \bar m(X))^2] = \mathbb E_X[\mathrm{Var}(Y \mid X)],
\qquad
\mathbb E[(Y - m_C(X, C))^2] = \mathbb E_{X, C}[\mathrm{Var}(Y \mid X, C)].
\end{equation}
The law of total variance gives $\mathrm{Var}(Y \mid X) = \mathbb E_{C \mid X}[\mathrm{Var}(Y \mid X, C)] + \mathrm{Var}_{C \mid X}(\mathbb E[Y \mid X, C])$. Taking expectations over $X$ and rearranging,
\begin{equation}
\mathbb E_X[\mathrm{Var}(Y \mid X)] - \mathbb E_{X, C}[\mathrm{Var}(Y \mid X, C)] = \mathbb E_X\!\left[\mathrm{Var}_{C \mid X}(\mathbb E[Y \mid X, C])\right] = \mathrm{Sig}(\phi),
\end{equation}
which is the second equality in~\eqref{eq:sig_mse}.

\paragraph{Oracle value lower bound.}
Fix $x$ and let $Z := m_C(x, C)$ with $C \sim \mu_C(\cdot \mid x)$. Then $\mathbb E_{C \mid X = x}[Z] = \bar m(x)$ and
\begin{equation}
\mathrm{Var}(Z \mid X = x) = \mathbb E_{C \mid X = x}\!\left[(Z - \bar m(x))^2\right].
\end{equation}
Let
\begin{equation}
M_x := \max_{c:\,\mu_C(c\mid x)>0} m_C(x,c),
\qquad
m_x := \min_{c:\,\mu_C(c\mid x)>0} m_C(x,c).
\end{equation}
Since $|Y|\leq B$ almost surely, for $P_X$-almost every $x$ we have $m_C(x,c)\in[-B,B]$ whenever $\mu_C(c\mid x)>0$, and hence $Z\in[m_x,M_x]\subseteq[-B,B]$.

We use the elementary bounded-support inequality
\begin{equation}
\mathrm{Var}(Z \mid X = x)
\leq
\big(M_x - \mathbb E_{C \mid X = x}[Z]\big)
\big(\mathbb E_{C \mid X = x}[Z] - m_x\big).
\end{equation}
To see this, note that $(M_x-Z)(Z-m_x)\geq 0$ almost surely, so
\begin{equation}
Z^2 \leq (M_x+m_x)Z - M_xm_x.
\end{equation}
Taking expectations conditional on $X=x$ and subtracting $\mathbb E_{C \mid X = x}[Z]^2$ gives
\begin{equation}
\mathrm{Var}(Z \mid X = x)
\leq
\big(M_x - \mathbb E_{C \mid X = x}[Z]\big)
\big(\mathbb E_{C \mid X = x}[Z] - m_x\big).
\end{equation}
Moreover, since $Z\in[-B,B]$,
\begin{equation}
\mathbb E_{C \mid X = x}[Z] - m_x \leq 2B.
\end{equation}
Therefore,
\begin{equation}
\mathrm{Var}(Z \mid X = x)
\leq
2B\big(M_x - \mathbb E_{C \mid X = x}[Z]\big).
\end{equation}
Using $\mathbb E_{C \mid X = x}[Z] = \bar m(x)$ and rearranging yields
\begin{equation}
\max_{c:\,\mu_C(c\mid x)>0} m_C(x,c)
=
M_x
\geq
\bar m(x)
+
\frac{1}{2B}\,\mathrm{Var}(Z \mid X = x).
\end{equation}
Taking expectations over $X$ and using $V_C^\star(\phi) = \mathbb E_X[M_X]$ and $V(\mu) = \mathbb E_X[\bar m(X)]$ yields
\begin{equation}
V_C^\star(\phi) \geq V(\mu) + \frac{1}{2B}\,\mathbb E_X\!\left[\mathrm{Var}_{C \mid X}(m_C(X, C))\right] = V(\mu) + \frac{1}{2B}\,\mathrm{Sig}(\phi).
\end{equation}
\hfill$\square$

\subsection{Proof of Theorem~\ref{thm:ra}}
\label{app:proof_ra}

Fix $\pi_C \in \Pi_C$ and define
\begin{equation}
\widehat v(x) := \sum_{c = 1}^K \pi_C(c \mid x)\,\widehat m(x, c),
\qquad
v(x) := \sum_{c = 1}^K \pi_C(c \mid x)\,m_C(x, c),
\end{equation}
so that $\mathbb E_X[\widehat V_{\mathrm{RA}}(\pi_C)] = \mathbb E_X[\widehat v(X)]$ and $V_C(\pi_C) = \mathbb E_X[v(X)]$. Let $e_c(x) := m_C(x, c) - \widehat m(x, c)$ denote the code-level prediction residual.

\paragraph{Bias term.}
By linearity,
\begin{equation}
V_C(\pi_C) - \mathbb E_X[\widehat V_{\mathrm{RA}}(\pi_C)]
= \mathbb E_X\!\left[\sum_{c = 1}^K \pi_C(c \mid X)\,e_c(X)\right].
\end{equation}
Cauchy--Schwarz pointwise in $X$ gives
\begin{equation}
\left|\sum_{c = 1}^K \pi_C(c \mid X)\,e_c(X)\right|
\leq \sqrt{\sum_{c = 1}^K \pi_C(c \mid X)\,e_c(X)^2},
\end{equation}
using $\sum_c \pi_C(c \mid X) = 1$. Taking expectations and applying Jensen's inequality (since $\sqrt{\cdot}$ is concave) yields
\begin{equation}
\big|V_C(\pi_C) - \mathbb E_X[\widehat V_{\mathrm{RA}}(\pi_C)]\big|
\leq \sqrt{\mathbb E_X\!\left[\sum_{c = 1}^K \pi_C(c \mid X)\,e_c(X)^2\right]}.
\end{equation}
Since $\pi_C(c \mid X) \leq 1$ for all $c$,
\begin{equation}
\mathbb E_X\!\left[\sum_{c = 1}^K \pi_C(c \mid X)\,e_c(X)^2\right]
\leq \sum_{c = 1}^K \mathbb E_{P_X}[e_c(X)^2]
\leq \sum_{c = 1}^K \mathbb E_{Q_c^\phi}[e_c(X)^2] + \lambda_{\mathrm{ov}}\sum_{c = 1}^K D(P_X, Q_c^\phi),
\end{equation}
where the last inequality is the shift bound~\eqref{eq:overlap_shift_assump}. This gives the first term of~\eqref{eq:ra_bound_main}.

\paragraph{Sample fluctuation term.}
Treating $\widehat m$ as fixed and independent of the evaluation sample (by cross-fitting), we have $|\widehat v(x)| \leq B$ for all $x$, since $|\widehat m(x, c)| \leq B$ and $\sum_c \pi_C(c \mid x) = 1$. Hoeffding's inequality and a union bound over the finite class $\Pi_C$ imply that with probability at least $1 - \delta$,
\begin{equation}
\sup_{\pi_C \in \Pi_C}\ \big|\widehat V_{\mathrm{RA}}(\pi_C) - \mathbb E_X[\widehat V_{\mathrm{RA}}(\pi_C)]\big|
\leq B\sqrt{\frac{2\log(2|\Pi_C| / \delta)}{n}}.
\end{equation}
Combining with the bias bound via the triangle inequality yields~\eqref{eq:ra_bound_main}.
\hfill$\square$

\newpage
\section*{Evaluation benchmarks}

\paragraph{Evaluation design.}
Benchmarking off-policy learning against known ground-truth policy values requires counterfactual outcomes for actions that were not taken, which are unavailable in ordinary logged data. Controlled and semi-synthetic benchmarks are therefore standard for validating OPE/OPL methods against known ground truth, including in closely related large-action work using synthetic, supervised-to-bandit, and reward-simulator evaluations~\citep{saito2022off,saito2023off,saitopotec2025,kiyohara2025off}. We first evaluate \framework on two deliberately controlled benchmarks with complementary purposes: the synthetic DGP isolates the mechanisms analyzed by the theory and enables oracle diagnostics, while the semi-synthetic headline benchmark combines real article contexts and LLM-generated structured actions with deliberately recoverable reward-relevant structure. Additionally, to evaluate the complete method in a less controlled, externally defined generative setting, we use the public OfflinePrompts benchmark with real MovieLens-derived contexts, raw generated text without injected simulator features, and a released reward simulator trained on real user ratings.

\section{Synthetic diagnostics}
\label{app:synthetic_diagnostics}

Here, we give the full data-generating process (DGP) and evaluation protocol for the synthetic
diagnostics in Section~7. The goal is not to provide a broad benchmark, but to isolate the three
mechanisms of \framework under controlled conditions: behavior-consistent lifting (BCL), predictive action
codes, and overlap-aware code learning. All diagnostics use the same DGP and vary only the relevant
control parameter. The three parameters are: $\alpha_{\mathrm{dep}}$, which controls residual
within-code action effects; $\alpha_{\mathrm{pred}}$, which controls the between-code outcome signal;
and $\kappa_{\mathrm{ov}}$, which controls code-level overlap.

\subsection{Data-generating process}
\label{app:synthetic_dgp}

We generate logged triples $(X_i,A_i,Y_i)$ from a simulator with hidden context type $Z_i^\star$,
hidden simulator code $C_i^\star$, and structured numeric action $A_i$. The learner observes only
$(X_i,A_i,Y_i)$. The hidden variables are used only to construct oracle diagnostics.

\paragraph{Contexts.}
Let $K=8$. For each unit,
\begin{equation}
Z^\star \sim \mathrm{Unif}\{0,\ldots,K-1\}.
\end{equation}
The observed context is
\begin{equation}
X = (e_{Z^\star}+\xi_x,\; u_x),
\qquad
\xi_x\sim \mathcal N(0,0.05^2 I_K),
\qquad
u_x\sim \mathcal N(0,I_4),
\end{equation}
where $e_z\in\mathbb R^K$ is the one-hot vector for context type $z$. Thus $X\in\mathbb R^{K+4}$.
The hidden context type creates context-dependent optimal codes, while the low-dimensional observed
context keeps oracle evaluation simple.

\paragraph{Logging policy over simulator codes.}
We define a relative code
\begin{equation}
D = (C^\star-Z^\star)\bmod K.
\end{equation}
The logging policy samples $D$ according to
\begin{equation}
p_{\kappa_{\mathrm{ov}}}(d)
=
\kappa_{\mathrm{ov}}\mathbf 1\{d=1\}
+
(1-\kappa_{\mathrm{ov}})\frac{1}{K},
\qquad d\in\{0,\ldots,K-1\},
\end{equation}
and sets
\begin{equation}
C^\star = (Z^\star + D)\bmod K.
\end{equation}
Equivalently, the true code propensity is
\begin{equation}
\mu_{C^\star}(c\mid z)
=
p_{\kappa_{\mathrm{ov}}}\big((c-z)\bmod K\big).
\end{equation}
The relative-code construction makes the optimal code depend on the context while keeping all
contexts symmetric. The parameter $\kappa_{\mathrm{ov}}$ controls overlap: larger values concentrate
the logging policy on the plausible but suboptimal relative code $D=1$.

\paragraph{Actions.}
The action vector has five blocks,
\begin{equation}
A=(A_{\mathrm{abs}},A_{\mathrm{bal}},A_{\mathrm{dep}},A_{\mathrm{nuis}},A_{\mathrm{rand}}).
\end{equation}
The first block carries a recoverable signal for the true simulator code:
\begin{equation}
A_{\mathrm{abs}}
=
\lambda_{\mathrm{abs}} e_{C^\star}
+
\eta_{\mathrm{abs}},
\qquad
\eta_{\mathrm{abs}}\sim\mathcal N(0,\sigma_{\mathrm{abs}}^2 I_K).
\end{equation}
Here $e_{C^\star}$ is the one-hot vector for the hidden code $C^\star$. This block makes the
outcome-relevant code recoverable by predictive code learning.

The second block carries a signal for a more balanced auxiliary code. Define
\begin{equation}
P(D)=
\begin{cases}
0, & D\in\{0,1\},\\
1, & D\in\{2,7\},\\
2, & D\in\{3,6\},\\
3, & D\in\{4,5\},
\end{cases}
\qquad
Q\sim\mathrm{Bernoulli}(1/2),
\qquad
H^\star=2P(D)+Q.
\end{equation}
Then
\begin{equation}
A_{\mathrm{bal}}
=
\lambda_{\mathrm{bal}} e_{H^\star}
+
\eta_{\mathrm{bal}},
\qquad
\eta_{\mathrm{bal}}\sim\mathcal N(0,\sigma_{\mathrm{bal}}^2 I_K).
\end{equation}
The auxiliary code $H^\star$ creates a representation that
is slightly coarser than $C^\star$ but better balanced across contexts, matching the predictiveness--overlap trade-off targeted by Stage~1. In our experiments, we use $H^\star$ as \textit{oracle balanced codes}.

The third block creates residual within-code action variation:
\begin{equation}
A_{\mathrm{dep}}
=
t_{Z^\star}
+
\eta_{\mathrm{dep}},
\qquad
\eta_{\mathrm{dep}}\sim\mathcal N(0,\sigma_{\mathrm{dep}}^2 I_{d_{\mathrm{dep}}}),
\end{equation}
where $t_z\in\mathbb R^{d_{\mathrm{dep}}}$ is a fixed context-specific prototype. This block makes
the concrete action matter within a code and therefore tests whether deployment changes the
within-code action distribution.

The fourth block is high-dimensional nuisance geometry:
\begin{equation}
A_{\mathrm{nuis}}
=
\lambda_{\mathrm{nuis}}q_J
+
\eta_{\mathrm{nuis}},
\qquad
J\sim\mathrm{Unif}\{0,\ldots,K-1\},
\qquad
\eta_{\mathrm{nuis}}\sim\mathcal N(0,\sigma_{\mathrm{nuis}}^2I_{d_{\mathrm{nuis}}}),
\end{equation}
where $q_j\in\mathbb R^{d_{\mathrm{nuis}}}$ are fixed nuisance prototypes. The nuisance block has no
effect on the outcome but dominates Euclidean action geometry, so geometric clustering can fail.

Finally,
\begin{equation}
A_{\mathrm{rand}}\sim\mathcal N(0,I_2).
\end{equation}
This independent block is used only to construct random balanced codes.

\paragraph{Outcomes.}
Let
\begin{equation}
b(z)=0.2\sin(2\pi z/K)
\end{equation}
and define the between-code reward signal by
\begin{equation}
s(z,c)=v_{(c-z)\bmod K},
\qquad
(v_0,\ldots,v_7)=(1.00,0.65,0.15,-0.15,-0.30,-0.10,0.05,0.30).
\end{equation}
Thus the optimal relative code is $D=0$, while the logging policy favors the suboptimal but plausible
relative code $D=1$.

The within-code residual effect is
\begin{equation}
r(z,A)
=
-\frac{1}{d_{\mathrm{dep}}}\|A_{\mathrm{dep}}-t_z\|_2^2.
\end{equation}
Under behavior-consistent lifting,
\begin{equation}
\mathbb E[r(Z^\star,A)\mid Z^\star=z,C^\star=c]
=
-\sigma_{\mathrm{dep}}^2,
\end{equation}
which is constant in $(z,c)$. Hence this term does not change the best code-level policy under
behavior-consistent lifting; it only penalizes deployment rules that change the within-code action
distribution.

The observed outcome is
\begin{equation}
Y
=
b(Z^\star)
+
\alpha_{\mathrm{pred}}s(Z^\star,C^\star)
+
\alpha_{\mathrm{dep}}r(Z^\star,A)
+
\varepsilon,
\qquad
\varepsilon\sim\mathcal N(0,\sigma_y^2).
\end{equation}
This separates the two relevant failure modes. The parameter $\alpha_{\mathrm{pred}}$ controls the
value of learning predictive codes, while $\alpha_{\mathrm{dep}}$ controls the deployment loss from
non-behavior-consistent instantiation.

\subsection{Oracle quantities}
\label{app:synthetic_oracles}

For the true simulator code $C^\star$, the code regression is available in closed form:
\begin{equation}
m_{C^\star}(z,c)
=
b(z)
+
\alpha_{\mathrm{pred}}v_{(c-z)\bmod K}
-
\alpha_{\mathrm{dep}}\sigma_{\mathrm{dep}}^2.
\end{equation}
The behavior value is
\begin{equation}
V(\mu)
=
\frac{1}{K}
\sum_{z=0}^{K-1}
\sum_{c=0}^{K-1}
\mu_{C^\star}(c\mid z)m_{C^\star}(z,c).
\end{equation}
The unconstrained oracle code value is
\begin{equation}
V^\star_{C^\star}
=
\frac{1}{K}
\sum_{z=0}^{K-1}
\max_{c\in\{0,\ldots,K-1\}}
m_{C^\star}(z,c).
\end{equation}
For a support threshold $\tau$, the support-constrained oracle value is
\begin{equation}
V^\star_{C^\star,\tau}
=
\frac{1}{K}
\sum_{z=0}^{K-1}
\max_{c:\mu_{C^\star}(c\mid z)\ge \tau}
m_{C^\star}(z,c).
\end{equation}
For a deterministic code policy $\hat c(z)$, the expected number of behavior-system draws required
by behavior-consistent lifting is
\begin{equation}
\mathrm{Draws}(\hat c)
=
\frac{1}{K}
\sum_{z=0}^{K-1}
\frac{1}{\mu_{C^\star}(\hat c(z)\mid z)}.
\end{equation}
For learned or unsupervised representations $G=g(A)$, oracle quantities are estimated by a
noise-free Monte Carlo sample from the known simulator. We estimate $\mu_G(g\mid z)$ and
$m_G(z,g)$ by sample averages and then evaluate oracle policies by table lookup. This keeps the
oracle diagnostics cheap and avoids training additional oracle models.

\subsection{Code representations}
\label{app:synthetic_representations}

The diagnostics compare the following code representations.

\paragraph{Ground-truth oracle codes.}
The representation is $G=C^\star$. This is, in general, not available to the learner and serves as an upper benchmark for the information contained in the simulator code.

\paragraph{Oracle balanced codes.}
The representation is $G=H^\star$.  This is, in general, not available to the learner and serves as an optimistic target performance for the learned balanced codes.

\paragraph{Random balanced codes.}
We construct $G_{\mathrm{rand}}$ by quantile-binning a fixed random projection of $A_{\mathrm{rand}}$
into $K$ bins. These codes are balanced by construction but independent of the outcome-relevant action structure.

\paragraph{Geometric k-means codes.}
We fit k-means with $K$ clusters on the raw action vector $A$. Since $A_{\mathrm{nuis}}$ is high-dimensional and high-variance, this representation tests whether purely geometric action clustering can be dominated by nuisance variation.

\paragraph{Learned predictive codes.}
We learn codes with the Stage~1 prediction objective and set the overlap and marginal-balance
weights to zero. This representation tests whether outcome-predictive code learning can recover
useful action structure from $A$.

\paragraph{Learned predictive and balanced codes.}
We learn codes with the full Stage~1 objective, including the prediction loss, distributional-overlap
regularizer, and marginal code-balance regularizer. This representation tests whether overlap-aware
code learning preserves most predictive signal while improving finite-sample reliability.

\subsection{Diagnostic protocols}
\label{app:synthetic_protocols}

\paragraph{Deployment diagnostic.}
The deployment diagnostic fixes the representation to $C^\star$, uses the oracle code-level policy,
and uses the true support rule. It varies only the Stage~3 instantiation rule and
$\alpha_{\mathrm{dep}}$. We compare behavior-consistent lifting, a marginal decoder
$\mu(a\mid c)$, a learned Gaussian decoder $\hat\kappa(a\mid x,c)$, and a uniform within-code sampler. The
main metric is deployment loss,
\begin{equation}
V_C(\pi_C^\star)-V(\pi_\kappa),
\end{equation}
where $\pi_C^\star$ is the oracle code-level policy and $\pi_\kappa$ is the deployed action-level
policy induced by instantiation rule $\kappa$. This diagnostic isolates whether changing the
within-code action distribution creates a gap between code-level value and deployed value.

\paragraph{Predictiveness diagnostic.}
The predictiveness diagnostic fixes behavior-consistent lifting and uses an oracle Stage~2 policy for
each representation. It varies the code representation and $\alpha_{\mathrm{pred}}$. For a
representation $G$, the oracle policy is
\begin{equation}
\pi_G^\star(z)
\in
\arg\max_g m_G(z,g).
\end{equation}
We report oracle value, improvement over the behavior policy, representation regret relative to
$C^\star$, and prediction error. Representation regret is
\begin{equation}
V^\star_{C^\star}-V_G(\pi_G^\star).
\end{equation}
Because Stage~2 and Stage~3 are fixed to their oracle versions, differences in this diagnostic isolate
the information contained in the learned or constructed code representation, and thus, show the change in the oracle attainable policy value.

\paragraph{Overlap diagnostic.}
The overlap diagnostic fixes behavior-consistent lifting and learns the Stage~2 policy with a
regression-adjusted objective. It varies the representation, the support rule, and
$\kappa_{\mathrm{ov}}$. For each representation, we evaluate policies with and without the
$\tau$-constraint. The constrained policy selects only supported codes,
\begin{equation}
\hat g_\tau(z)
\in
\arg\max_{g:\hat\mu_G(g\mid z)\ge \tau}
\hat m_G(z,g),
\end{equation}
while the unconstrained policy maximizes $\hat m_G(z,g)$ over all codes. We report deployed value,
improvement over the behavior policy, RA error, unsupported-code rate, and expected draws. The
unsupported-code rate is
\begin{equation}
\mathbb E\left[
\mathbf 1\{\mu_G(\hat g(X)\mid X)<\tau\}
\right],
\end{equation}
and the RA error is
\begin{equation}
\left|
\hat V_{\mathrm{RA}}(\hat\pi_G)
-
V_G(\hat\pi_G)
\right|.
\end{equation}
This diagnostic isolates whether overlap-aware code learning and the $\tau$ support rule improve
reliability when code-level support becomes weak.

\newpage

\subsection{Default parameters}
\label{app:synthetic_parameters}

Unless otherwise stated, we use the parameters in Table~\ref{tab:synthetic_parameters}. We average
results over independent random seeds and report one standard error across seeds.

\begin{table}[h]
\centering
\small
\caption{Default parameters for the synthetic diagnostics.}
\label{tab:synthetic_parameters}
\begin{tabular}{ll}
\toprule
Quantity & Value \\
\midrule
Number of simulator codes and context types & $K=8$ \\
Observed context dimension & $K+4=12$ \\
Action block dimensions & $8+8+4+64+2$ \\
Outcome noise & $\sigma_y=0.25$ \\
Absolute-code block & $\lambda_{\mathrm{abs}}=0.8,\ \sigma_{\mathrm{abs}}=0.15$ \\
Balanced-code block & $\lambda_{\mathrm{bal}}=0.6,\ \sigma_{\mathrm{bal}}=0.25$ \\
Deployment block & $d_{\mathrm{dep}}=4,\ \sigma_{\mathrm{dep}}=0.25$ \\
Nuisance block & $d_{\mathrm{nuis}}=64,\ \lambda_{\mathrm{nuis}}=4.0,\ \sigma_{\mathrm{nuis}}=0.5$ \\
Default predictiveness strength & $\alpha_{\mathrm{pred}}=1$ \\
Default deployment-residual strength & $\alpha_{\mathrm{dep}}=0$ \\
Default overlap severity & $\kappa_{\mathrm{ov}}=0.4$ \\
Support threshold & $\tau=0.03$ \\
Deployment sweep & $\alpha_{\mathrm{dep}}\in\{0,0.25,0.5,1.0\}$ \\
Predictiveness sweep & $\alpha_{\mathrm{pred}}\in\{0,0.25,0.5,1.0,1.5\}$ \\
Overlap sweep & $\kappa_{\mathrm{ov}}\in\{0.4,0.6,0.8,0.9\}$ \\
\bottomrule
\end{tabular}
\end{table}

\section{Semi-synthetic benchmark construction}
\label{app:semisynth_construction}

We construct a semi-synthetic text benchmark with real article excerpts as contexts $X$ from the Upworthy Research Archive~\citep{matias2021upworthy} (CC BY 4.0), LLM-generated headlines as actions $A$, and rewards assigned by a controlled simulator. The benchmark evaluates the full method in a noisy structured-text setting under limited code-level overlap while deliberately retaining recoverable reward-relevant structure. All methods receive the same numerical representations (Appendix~\ref{app:semisynth_reward}), which makes comparisons less sensitive to whether generic text embeddings capture the simulated reward-relevant structure. OfflinePrompts (Appendix~\ref{app:offlineprompts}) separately evaluates raw generated text without injected simulator features.

\subsection{Structural data-generating process}
\label{app:semisynth_dgp}

The simulator follows the graph
$$
X \rightarrow Z^\star,\qquad
Z^\star \rightarrow C^\star,\qquad
(X,C^\star)\rightarrow A,\qquad
(Z^\star,C^\star,A)\rightarrow Y .
$$
For each unit $i$, we generate
$$
\begin{aligned}
Z_i^\star &= h_X(X_i),\\
C_i^\star &\sim \mu_{C^\star}(\cdot\mid Z_i^\star),\\
A_i &\sim G_{\mathrm{acc}}(\cdot\mid X_i,C_i^\star),\\
Y_i &= m^\star(Z_i^\star,C_i^\star)+\varepsilon_i,
\qquad \varepsilon_i\sim \mathcal N(0,\sigma^2).
\end{aligned}
$$
Learner inputs are specified in Appendix~\ref{app:semisynth_reward}; the annotators $h_X$ and $h_A$ are not provided to learning methods.

\begin{table}[t]
\centering
\small
\caption{Core notation for the semi-synthetic benchmark.}
\label{tab:ss_notation_final}
\begin{tabular}{ll}
\toprule
Symbol & Meaning \\
\midrule
$X$ & observed context text, here an Upworthy article excerpt \\
$A$ & observed action text, here a generated headline \\
$Y$ & observed scalar reward \\
$Z^\star\in\mathcal Z$ & hidden simulator context representation \\
$C^\star\in\mathcal C^\star$ & hidden simulator action code \\
$\mu_{C^\star}(c\mid z)$ & code-level logging policy \\
$G(\cdot\mid X,C^\star)$ & prompted headline generator \\
$G_{\mathrm{acc}}(\cdot\mid X,C^\star)$ & accepted generator after validation \\
$h_X$ & context annotator returning $Z^\star$ \\
$h_A$ & action validator returning realized code and faithfulness \\
$m^\star$ & structural mean reward \\
$M_i(c)$ & oracle code-level reward table for context $i$ \\
\bottomrule
\end{tabular}
\end{table}

\subsection{Simulator context and action spaces}
\label{app:semisynth_spaces}

We use Upworthy article excerpts as contexts. The real data provide natural language inputs, but outcomes are assigned by the simulator rather than by observed click rates. This gives realistic text contexts while retaining known counterfactual rewards.

The hidden simulator context representation is
$$
Z^\star=(T,U,N),
$$
where $T$ is the article topic, $U$ is the tone, and $N$ is the specificity need.

\begin{table}[t]
\centering
\small
\caption{Hidden simulator context representation $Z^\star=(T,U,N)$.}
\label{tab:ss_context_space}
\begin{tabularx}{\linewidth}{@{}ll>{\raggedright\arraybackslash}X@{}}
\toprule
Component & Meaning & Values \\
\midrule
$T$ & topic &
\texttt{public\_affairs}, \texttt{science\_health}, \texttt{entertainment}, \texttt{sports\_human} \\
$U$ & tone &
\texttt{informational}, \texttt{emotional}, \texttt{surprising} \\
$N$ & specificity need &
\texttt{low}, \texttt{high} \\
\bottomrule
\end{tabularx}
\end{table}

The hidden simulator action code is
$$
C^\star=(S,R),
$$
where $S$ is the headline strategy and $R$ is the tailoring level.

\begin{table}[t]
\centering
\small
\caption{Hidden simulator action code $C^\star=(S,R)$.}
\label{tab:ss_action_space}
\begin{tabular}{lll}
\toprule
Component & Meaning & Values \\
\midrule
$S$ & headline strategy &
\texttt{summary}, \texttt{curiosity}, \texttt{emotion}, \texttt{direct} \\
$R$ & tailoring level &
\texttt{generic}, \texttt{tailored} \\
\bottomrule
\end{tabular}
\end{table}

A \texttt{summary} headline states the core content directly. A \texttt{curiosity} headline withholds a key detail to induce curiosity without being misleading. An \texttt{emotion} headline foregrounds affective framing. A \texttt{direct} headline addresses the reader or frames the article as personally relevant. A \texttt{generic} headline avoids article-specific named entities or concrete details, while a \texttt{tailored} headline includes article-specific entities, details, or concrete context.

\subsection{Reward-relevant strategy structure}
\label{app:semisynth_strategy_structure}

For each topic--tone pair $(T,U)$, we define a full simulator ranking of headline strategies
$$
S_1^\star(T,U),S_2^\star(T,U),S_3^\star(T,U),S_4^\star(T,U),
$$
where $S_1^\star(T,U)$ is best and $S_4^\star(T,U)$ is worst. Table~\ref{tab:ss_strategy_ranking} gives the ranking used in all experiments.

\begin{table}[t]
\centering
\small
\caption{Simulator ranking of headline strategies. Entries are ordered from best to worst.}
\label{tab:ss_strategy_ranking}
\begin{tabularx}{\linewidth}{@{}l*{3}{>{\raggedright\arraybackslash}X}@{}}
\toprule
Topic $T$ & $U=\texttt{informational}$ & $U=\texttt{emotional}$ & $U=\texttt{surprising}$ \\
\midrule
\texttt{public\_affairs}
& summary $>$ direct $>$ emotion $>$ curiosity
& direct $>$ emotion $>$ summary $>$ curiosity
& curiosity $>$ direct $>$ summary $>$ emotion \\
\texttt{science\_health}
& summary $>$ curiosity $>$ direct $>$ emotion
& direct $>$ emotion $>$ summary $>$ curiosity
& curiosity $>$ summary $>$ direct $>$ emotion \\
\texttt{entertainment}
& direct $>$ curiosity $>$ emotion $>$ summary
& emotion $>$ curiosity $>$ direct $>$ summary
& curiosity $>$ emotion $>$ direct $>$ summary \\
\texttt{sports\_human}
& summary $>$ emotion $>$ direct $>$ curiosity
& emotion $>$ direct $>$ summary $>$ curiosity
& curiosity $>$ emotion $>$ direct $>$ summary \\
\bottomrule
\end{tabularx}
\end{table}

The logging-preferred tailoring level is
$$
R^\star(N)=
\begin{cases}
\texttt{generic}, & N=\texttt{low},\\
\texttt{tailored}, & N=\texttt{high}.
\end{cases}
$$
For context type $z=(T,U,N)$, the reward-maximizing codes are
$$
\arg\max_{c}m^\star(z,c)
=\big\{(S_1^\star(T,U),r):r\in\{\texttt{generic},\texttt{tailored}\}\big\}.
$$

\subsection{Structural reward model}
\label{app:semisynth_reward}

For $z=(T,U,N)$ and $c=(s,r)$, the reward depends on the context-specific rank of the strategy $s$:
\begin{equation}
m^\star(z,c)
=
b(z)
+
\theta_{\operatorname{rank}(s;T,U)} .
\end{equation}
Here $\operatorname{rank}(s;T,U)\in\{1,2,3,4\}$ is given by the strategy lookup table in Table~\ref{tab:ss_strategy_ranking}. The rank effects are
\begin{equation}
(\theta_1,\theta_2,\theta_3,\theta_4)
=
(1.00,0.65,0.15,-0.15).
\end{equation}
The additive context baseline is deterministic and does not affect the optimal action code:
\begin{equation}
b(z_i)
=
0.2\sin\!\left(
2\pi\,r(z_i)/|\mathcal Z_{\mathrm{obs}}|
\right),
\end{equation}
where $r(z_i)$ is the index of the observed hidden context label among the labels in the constructed benchmark. In the reported benchmark there is no additional outcome noise, so
\begin{equation}
Y_i
=
m^\star(Z_i^\star,C_i^\star).
\end{equation}
Policy values are evaluated using the same noise-free oracle reward table.

\paragraph{Observed numerical representations.}
The text learner operates on numerical representations $(\tilde X_i,\tilde A_i)$ that augment base text embeddings with controlled simulator-derived features: an exact context-label encoding and noisy action-code and strategy signals, alongside nuisance and random features. For contexts,
\begin{equation}
\tilde X_i
=
\big[
\phi_X(X_i),\,
3.0\,e_{Z_i^\star}
\big],
\end{equation}
where $e_{Z_i^\star}$ is a one-hot encoding of the hidden context label. For actions,
\begin{equation}
\tilde A_i
=
\big[
\phi_A(A_i),\,
0.8\,e_{C_i^\star}+\varepsilon^{(C)}_i,\,
0.6\,e_{H_i^\star}+\varepsilon^{(H)}_i,\,
4.0\,v_{J_i}+\varepsilon^{(J)}_i,\,
\varepsilon^{(R)}_i
\big],
\end{equation}
where $H_i^\star$ is the four-way strategy component of $C_i^\star$, $v_{J_i}$ is a random normalized nuisance prototype, and the noise scales are $0.15$, $0.25$, and $0.5$ for the $C^\star$, strategy, and nuisance blocks, respectively. The nuisance block has dimension $64$ and the final random tail has dimension $2$. This construction preserves the same generated texts, logging policy, and candidate pool, but makes explicit which reward-relevant and nuisance signals are available to representation learning.

\subsection{Code-level logging policy}
\label{app:semisynth_logging}

The logged simulator code is sampled from
$$
\mu_{C^\star}(s,r\mid z)
=
\mu_S(s\mid T,U)\mu_R(r\mid N).
$$
This factorization reflects that headline strategy depends on topic and tone, while tailoring depends on specificity need.

For strategy logging, we use an orientation parameter
$$
d\in\{\mathrm{optimal},\mathrm{suboptimal}\}
$$
and a concentration parameter $\kappa_S\in[0,1]$. Define
$$
s_{\mathrm{pref}}^d(T,U)=
\begin{cases}
S_1^\star(T,U), & d=\mathrm{optimal},\\
S_3^\star(T,U), & d=\mathrm{suboptimal}.
\end{cases}
$$
Then
$$
\mu_S(s\mid T,U)
=
\kappa_S\1\{s=s_{\mathrm{pref}}^d(T,U)\}
+
(1-\kappa_S)\frac{1}{|\mathcal S|}.
$$
Thus, $\kappa_S=0$ gives uniform strategy logging, while larger $\kappa_S$ concentrates mass on the preferred logging strategy. In the suboptimal-oriented regime, the historical policy favors a plausible but non-optimal strategy while still assigning nonzero probability to the optimal one.

For tailoring, we use
$$
\mu_R(r\mid N)
=
\kappa_R\1\{r=R^\star(N)\}
+
(1-\kappa_R)\frac{1}{|\mathcal R|}.
$$
Larger $\kappa_R$ means that the logging policy more often chooses the tailoring level appropriate for the context.

With four strategies, $\kappa_S=0.60$ assigns probability $0.70$ to the preferred strategy and probability $0.10$ to each other strategy. With two tailoring levels, $\kappa_R=0.50$ assigns probability $0.75$ to the appropriate tailoring level and probability $0.25$ to the other level.

\subsection{Prompt-based construction}
\label{app:semisynth_prompts}

The benchmark uses three prompt-based components: a context annotator $h_X$, a headline generator $G$, and an action validator $h_A$. All prompts return structured outputs.

\begin{tcolorbox}[title={Prompt 1: Context annotator $h_X$},colback=white,colframe=black!70,boxrule=0.5pt]
\small
Read the article excerpt and return JSON with keys \texttt{topic}, \texttt{tone}, and \texttt{specificity\_need}. Use only the following values:
\begin{itemize}
    \item \texttt{topic} $\in$ \{\texttt{public\_affairs}, \texttt{science\_health}, \texttt{entertainment}, \texttt{sports\_human}\},
    \item \texttt{tone} $\in$ \{\texttt{informational}, \texttt{emotional}, \texttt{surprising}\},
    \item \texttt{specificity\_need} $\in$ \{\texttt{low}, \texttt{high}\}.
\end{itemize}
Set \texttt{specificity\_need = high} if an effective headline should mention article-specific entities, people, places, or concrete details; otherwise set it to \texttt{low}. Output only JSON.
\end{tcolorbox}

\begin{tcolorbox}[title={Prompt 2: Headline generator $G$},colback=white,colframe=black!70,boxrule=0.5pt]
\small
Read the article excerpt. Write exactly one Upworthy-style headline that is faithful to the excerpt and does not invent facts.

Required strategy: \texttt{\{summary | curiosity | emotion | direct\}}.

Required tailoring level: \texttt{\{generic | tailored\}}.

Definitions:
\begin{itemize}
    \item \texttt{summary}: state the core content of the article directly.
    \item \texttt{curiosity}: withhold a key detail to create curiosity without being misleading.
    \item \texttt{emotion}: foreground emotional or affective framing.
    \item \texttt{direct}: address the reader or make the article personally relevant.
    \item \texttt{generic}: avoid article-specific named entities or concrete details.
    \item \texttt{tailored}: include article-specific entities, details, or concrete context.
\end{itemize}
Keep the headline concise. If an observed headline is provided, use it only as a stylistic anchor and do not copy it verbatim. Output only the headline.
\end{tcolorbox}

\begin{tcolorbox}[title={Prompt 3: Action validator $h_A$},colback=white,colframe=black!70,boxrule=0.5pt]
\small
Given an article excerpt and a headline, return JSON with keys \texttt{strategy}, \texttt{tailoring}, and \texttt{faithful}. Use only:
\begin{itemize}
    \item \texttt{strategy} $\in$ \{\texttt{summary}, \texttt{curiosity}, \texttt{emotion}, \texttt{direct}\},
    \item \texttt{tailoring} $\in$ \{\texttt{generic}, \texttt{tailored}\},
    \item \texttt{faithful} $\in$ \{0,1\}.
\end{itemize}
Set \texttt{faithful = 1} only if the headline is supported by the excerpt and does not introduce unsupported claims. Output only JSON.
\end{tcolorbox}

\subsection{Accepted action generator}
\label{app:semisynth_accepted_generator}

The prompt $G(\cdot\mid X,C^\star)$ asks the language model to generate a headline with the requested simulator code. Since prompt compliance is not guaranteed, we validate the realized code using $h_A$. For intended code $c$, write
$$
h_A(X,A)=(\tilde C^\star,\mathrm{faithful}).
$$
We accept a generated headline only if
$$
\tilde C^\star=c
\qquad\text{and}\qquad
\mathrm{faithful}=1.
$$
This defines the accepted generator
$$
G_{\mathrm{acc}}(\cdot\mid X,c)
=
G(\cdot\mid X,c,\ h_A(X,A)=(c,1)).
$$
Equivalently, $G_{\mathrm{acc}}$ is implemented by rejection sampling from $G$. This validation step is performed before assigning outcomes and does not use $Y$. Its role is to ensure that the observed text action $A_i$ is semantically aligned with the simulator code $C_i^\star$ whose reward is assigned.

In implementation, we use a fixed retry budget $L_{\max}$. If no valid headline is obtained after $L_{\max}$ attempts, the unit is discarded.

\subsection{Logged data and oracle evaluation}
\label{app:semisynth_oracle_evaluation}

After construction, all learners use the representations specified in Appendix~\ref{app:semisynth_reward} for the logged data
$$
\mathcal D_{\mathrm{log}}=\{(X_i,A_i,Y_i)\}_{i=1}^n.
$$
The simulator reward function and oracle counterfactual rewards are not provided to learners.

For the main benchmark, we store the oracle code-level reward table
$$
M_i(c)=m^\star(Z_i^\star,c),
\qquad c\in\mathcal C^\star.
$$
The oracle code policy is
$$
c_i^\star
=
\argmax_{c\in\mathcal C^\star} M_i(c),
$$
and its value is evaluated without outcome noise.

For any deployed text action $\tilde A_i$, we first validate the realized simulator code:
$$
h_A(X_i,\tilde A_i)=(\tilde C_i^\star,\mathrm{faithful}_i).
$$
Only valid and faithful deployed headlines are evaluated. For valid headlines, the noise-free deployed reward is
$$
v_i(\tilde A_i)
=
m^\star(Z_i^\star,\tilde C_i^\star).
$$
In the main code-sufficient benchmark, this reduces to
$$
v_i(\tilde A_i)
=
M_i(\tilde C_i^\star).
$$
Thus, the benchmark has an exact code-level oracle while deployed policies return generated headlines.

\section{OfflinePrompts benchmark}
\label{app:offlineprompts}

Beyond the two controlled benchmarks, we evaluate the complete method on the public MovieLens task from OfflinePrompts~\citep{kiyohara2025off}, providing additional validation in an externally defined generative setting. Contexts combine the released embeddings of real MovieLens users and movies; actions are raw movie-related sentences generated by the frozen Mistral-7B-Instruct-v0.2 model, with no injected simulator features. Rewards come from the released personalized DistilBERT reward simulator trained on real user ratings. \framework learns from context, generated text, and reward without observing the generating prompt.

\paragraph{Behavior system and logged data.}
For reproducible and computationally efficient evaluation, we pre-generate the Mistral outputs under the benchmark prompt set and reuse the same generated outputs across methods. We use a fixed subset of 500 movies and 100 prompts. For each movie--prompt pair, we generate one sentence by greedy decoding with a maximum of 30 new tokens and the released prompt formatter. The behavior system samples a prompt from a context-dependent softmax distribution and returns the corresponding pre-generated sentence. Its scores are bilinear in the context and prompt embeddings; we calibrate the softmax concentration on training contexts so that the top 50 prompts carry mean probability mass $0.90$. Prompt identities and their logging probabilities are hidden from \framework, which receives only context, sentence embedding, and reward. Each of five seeds produces 10,000 logged observations, adding independent Gaussian reward noise with standard deviation $0.05$.

\paragraph{Representation and evaluation.}
We follow the released sentence-embedding procedure, projecting pooled Mistral hidden states to 20 dimensions with the supplied PCA matrix. We use the released user split, with disjoint training and test halves, and evaluate all methods on the same 10,000 sampled test contexts. The movie and prompt catalog is shared across the split. All learned policies and generated-sentence selections are frozen before evaluation with the reward simulator. Reported values average its expected rewards, without added observation noise. Behavior draws use the same behavior system for every method; \framework uses a cap of $T_{\max}=20$ and an independent behavior draw on fallback.

\paragraph{Training configuration.}
We use $K=8$, $\tau=0.03$, two hidden layers of width~64, batch size~32, learning rate $10^{-4}$, and 50 Stage~1 epochs. The proposed configuration uses $\beta=10^{-4}$ and $\gamma=0.2$; predictive-only codes set both to zero. Stage~2 fits a ridge outcome model with context--code interactions and a multinomial logistic propensity model. RA selects the highest predicted reward among supported codes. Standard errors describe variability across these runs on the fixed test contexts and pre-generated outputs.

\newpage

\section{Additional results}\label{app:add_experiments}

\subsection{Synthetic diagnostics}

Figure~\ref{fig:synthetic_diagnostics_complete} presents the complete synthetic
diagnostics. Its columns vary within-code reward effects
($\alpha_{\mathrm{dep}}$, left), predictive signal
($\alpha_{\mathrm{pred}}$, middle), and logging-policy concentration
($\kappa_{\mathrm{ov}}$, right). The left legend lists deployment rules;
the right legend lists code representations for the middle and right columns.
Predictiveness and overlap curves show means with standard errors over 5
seeds. Deployment values are computed analytically, so no standard errors
are shown. The protocols are described in Appendix~\ref{app:synthetic_protocols}.

\begin{figure}[htbp]
\centering
\includegraphics[width=\linewidth]{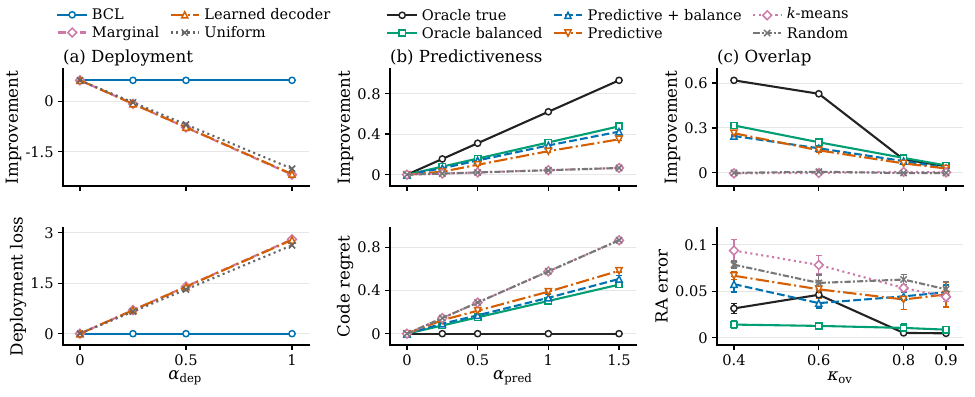}
\caption{\textbf{Complete synthetic diagnostics.}
Top: policy improvement (higher is better).
Bottom: deployment loss, code regret, and RA estimation error, respectively
(lower is better).}
\label{fig:synthetic_diagnostics_complete}
\end{figure}

\FloatBarrier
\subsubsection{Deployment}

In the left column of Figure~\ref{fig:synthetic_diagnostics_complete},
ground-truth codes and the oracle code-level policy are fixed while the
instantiation rule varies. Table~\ref{tab:synthetic_deployment_diagnostic}
reports the strongest residual-effect setting, $\alpha_{\mathrm{dep}}=1$,
under the default synthetic DGP. All rows use the ground-truth code $C^\star$,
the oracle code-level policy, and the true support rule; rows differ only in
the Stage~3 instantiation rule. Values are computed analytically under the
known DGP, so no standard errors are reported.

Deployment loss is $V_C(\pi_C^\star)-V(\pi_\kappa)$, and absolute deployment
error is $\left|V(\pi_\kappa)-V_C(\pi_C^\star)\right|$.
Behavior-consistent lifting (BCL) preserves the code-level value, while
alternative instantiation rules change the within-code action distribution
and therefore incur deployment loss.

\begin{table}[htbp]
\centering
\caption{\textbf{Deployment diagnostic.} Default synthetic DGP at $\alpha_{\mathrm{dep}}=1$; analytically computed values.}
\label{tab:synthetic_deployment_diagnostic}
\small
\setlength{\tabcolsep}{4pt}
\renewcommand{\arraystretch}{1.15}
\begin{tabular}{lcccc}
\toprule
Deployment rule
& \shortstack{Policy improvement\\$V(\pi)-V(\mu)$}
& \shortstack{Value\\$V(\pi)$}
& \shortstack{Deployment loss\\$V_C(\pi_C)-V(\pi_\kappa)$}
& \shortstack{Abs. deployment\\error} \\
\midrule
BCL / Ours   &  0.620 &  0.938 & 0.000 & 0.000 \\
Marginal     & -2.188 & -1.870 & 2.808 & 2.808 \\
Learned dec. & -2.170 & -1.853 & 2.790 & 2.790 \\
Uniform      & -2.010 & -1.693 & 2.630 & 2.630 \\
\bottomrule
\end{tabular}
\end{table}

\FloatBarrier
\subsubsection{Predictiveness}

In the middle column of Figure~\ref{fig:synthetic_diagnostics_complete},
oracle Stage~2 optimization and behavior-consistent lifting are fixed while
the representation varies.
Table~\ref{tab:synthetic_predictiveness_diagnostic} reports the default
synthetic DGP at $\alpha_{\mathrm{pred}}=1$. All rows use an oracle Stage~2
code-level policy and behavior-consistent lifting; rows differ only in the
Stage~1 code representation. Values are reported as mean (standard error)
over random seeds.

Oracle true codes give the learner-inaccessible upper benchmark, while
oracle balanced codes are a designed support-aware proxy. Policy improvement
is $V_G^\star-V(\mu)$, where $V_G^\star$ is the value of the oracle code-level
policy for representation $G$. Code regret is
$V_{C^\star}^\star-V_G^\star$. Prediction error is the noise-free grouped
error $\mathbb E[(Y-m_G(X,G))^2]$.
Because Stage~2 is oracle for every row, differences isolate the information
contained in the code representation.

\begin{table}[htbp]
\centering
\caption{\textbf{Predictiveness diagnostic.} Default synthetic DGP at $\alpha_{\mathrm{pred}}=1$; mean (standard error) over random seeds.}
\label{tab:synthetic_predictiveness_diagnostic}
\small
\setlength{\tabcolsep}{4pt}
\renewcommand{\arraystretch}{1.15}
\begin{tabular}{lcccc}
\toprule
Code representation
& Value
& \shortstack{Policy improvement\\$V(\pi)-V(\mu)$}
& \shortstack{Code regret\\$V_{C^\star}^\star - V_G^\star$}
& \shortstack{Prediction error\\$\mathbb E[(Y-m_G(X,G))^2]$} \\
\midrule
Oracle true codes     & 1.000 (0.000) & 0.620 (0.000) & 0.000 (0.000) & 0.000 (0.000) \\
Oracle balanced codes & 0.698 (0.000) & 0.318 (0.000) & 0.302 (0.000) & 0.012 (0.000) \\
\midrule
Pred.+bal. codes      & 0.668 (0.036) & 0.288 (0.036) & 0.332 (0.036) & 0.088 (0.001) \\
Pred. codes           & 0.612 (0.015) & 0.232 (0.015) & 0.388 (0.015) & 0.084 (0.003) \\
K-means               & 0.424 (0.006) & 0.044 (0.006) & 0.576 (0.006) & 0.148 (0.001) \\
Random                & 0.424 (0.004) & 0.044 (0.004) & 0.576 (0.004) & 0.148 (0.001) \\
\bottomrule
\end{tabular}
\end{table}

\FloatBarrier
\subsubsection{Overlap}

In the right column of Figure~\ref{fig:synthetic_diagnostics_complete},
behavior-consistent lifting and the support constraint are fixed while
representations are compared under increasing logging-policy concentration.
Together, the overlap panels distinguish attainable policy improvement from
value-estimation accuracy.

Table~\ref{tab:synthetic_overlap_diagnostic} reports the default synthetic
DGP at $\kappa_{\mathrm{ov}}=0.9$. Rows use the same code representations as
in Table~\ref{tab:synthetic_predictiveness_diagnostic}, while Stage~2 is
learned with a regression-adjusted objective and all rows use the $\tau$
support constraint and behavior-consistent lifting. Values are reported as
mean (standard error) over random seeds.

Oracle true codes give the learner-inaccessible upper benchmark under the
support constraint, while oracle balanced codes are a designed support-aware
proxy. RA error is
$\left|\widehat V_{\mathrm{RA}}(\widehat\pi_C)-V_C(\widehat\pi_C)\right|$.
Expected draws is $\mathbb E[1/\mu_C(\widehat c(X)\mid X)]$ and measures the
cost of behavior-consistent lifting. All rows are support constrained, so
unsupported-code rates are zero by construction and omitted.

\begin{table}[htbp]
\centering
\caption{\textbf{Overlap diagnostic.} Default synthetic DGP at $\kappa_{\mathrm{ov}}=0.9$; mean (standard error) over random seeds.}
\label{tab:synthetic_overlap_diagnostic}
\small
\setlength{\tabcolsep}{4pt}
\renewcommand{\arraystretch}{1.15}
\begin{tabular}{lcccc}
\toprule
Code representation
& Value
& \shortstack{Policy improvement\\$V(\pi)-V(\mu)$}
& \shortstack{RA error\\$|\widehat V_{\mathrm{RA}}-V|$}
& \shortstack{Expected\\draws} \\
\midrule
Oracle true codes     & 0.649 (0.000) &  0.044 (0.000) & 0.005 (0.000) & 1.10 \\
Oracle balanced codes & 0.653 (0.000) &  0.048 (0.000) & 0.009 (0.004) & 2.16 \\
\midrule
Pred.+bal. codes      & 0.635 (0.008) &  0.030 (0.008) & 0.049 (0.008) & 3.63 \\
Pred. codes           & 0.633 (0.004) &  0.028 (0.004) & 0.047 (0.014) & 3.09 \\
K-means               & 0.607 (0.003) &  0.002 (0.003) & 0.044 (0.005) & 7.96 \\
Random                & 0.603 (0.002) & -0.002 (0.002) & 0.052 (0.008) & 8.17 \\
\bottomrule
\end{tabular}
\end{table}

\FloatBarrier
\subsection{Sensitivity to loss parameters}

Figure~\ref{fig:semisynth_ablations} shows sensitivity to the loss parameters
$\beta$ (top) and $\gamma$ (bottom) on the semi-synthetic dataset.
The left panels report policy improvement (higher is better), and the right
panels report expected draws (lower is better).

\begin{figure}[htbp]
\centering
\begin{subfigure}{0.85\linewidth}
    \centering
    \includegraphics[width=\linewidth]{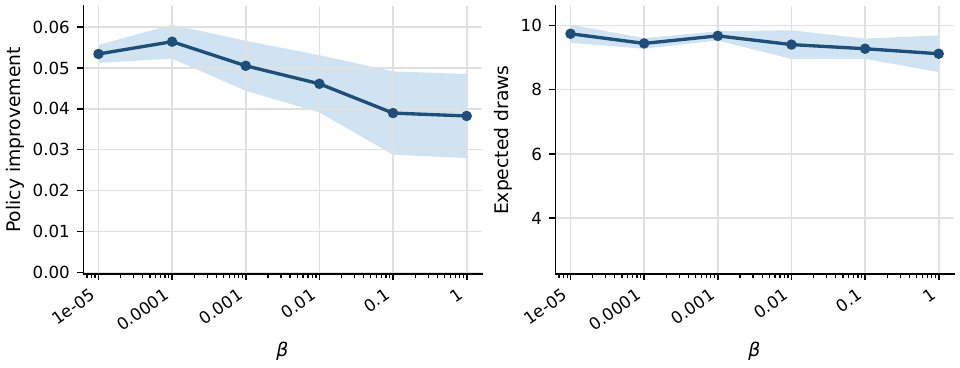}
    \phantomcaption\label{fig:top}
\end{subfigure}

\medskip

\begin{subfigure}{0.85\linewidth}
    \centering
    \includegraphics[width=\linewidth]{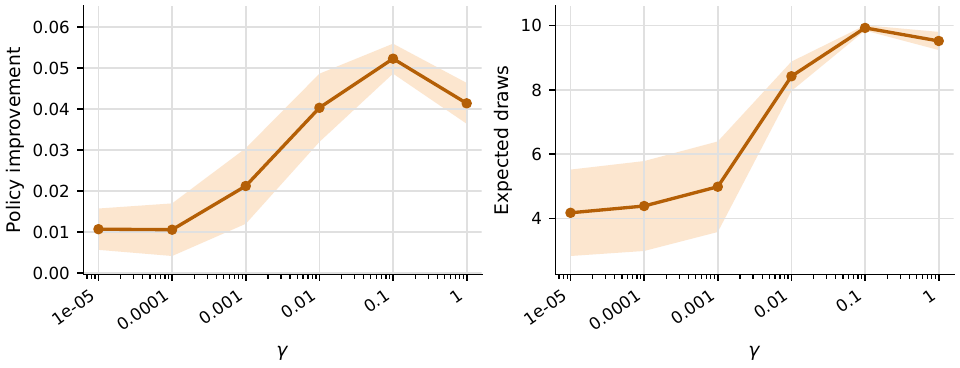}
    \phantomcaption\label{fig:bottom}
\end{subfigure}
\caption{\textbf{Sensitivity to loss parameters on the semi-synthetic dataset.}
$\beta$ varies in the top row and $\gamma$ in the bottom row.
Left: policy improvement; right: expected draws.}
\label{fig:combined}
\label{fig:semisynth_ablations}
\end{figure}
\FloatBarrier

\newpage

\section{Implementation details}
\label{app:implementation}
\begin{algorithm}[h]
\caption{PolicyLift (implementation-aligned)}
\label{alg:policylift_impl_long}
\KwIn{
Logged data $\mathcal D=\{(X_i,A_i,Y_i)\}_{i=1}^n$;
code encoder $g_\phi$ (action branch), context encoder $\psi$, outcome head $f_\theta$;
number of codes $K$;
weights $\beta\ge 0,\gamma\ge 0$;
threshold $\tau\ge 0$;
acceptance cap $T_{\max}$.
}
\KwOut{Deployed policy sampler $\widehat\pi^\uparrow(\cdot\mid x)$.}

\BlankLine
\textbf{Stage 1: Learn code encoder and predictor}\;
Initialize $(\phi,\psi,\theta)$\;
\While{not converged}{
Compute logits $\ell_i \leftarrow \text{enc}_a(A_i)$ and straight-through Gumbel-hard codes
$\tilde C_i \leftarrow \text{GumbelSoftmax}(\ell_i;\texttt{hard=true})$\;
Compute context embedding $Z_i \leftarrow \psi(X_i)$\;
$\widehat{\mathcal L}_{\text{pred}} \leftarrow \frac{1}{n}\sum_i \big(Y_i-f_\theta([Z_i,\tilde C_i])\big)^2$\;
$\widehat{\mathcal L}_{\text{ov}} \leftarrow \sum_{c=1}^K \widehat D\!\left(\widehat P_{Z},\widehat P_{Z\mid \tilde C=c}\right)$
\tcp*{stop-gradient through $Z$ in this term}
$\widehat{\mathcal L}_{\text{cb}} \leftarrow \mathrm{KL}(\widehat p_C\|\mathrm{Unif}(\{1,\dots,K\}))$, with $\widehat p_C=\frac{1}{n}\sum_i \tilde C_i$\;
Update $(\phi,\psi,\theta)$ to minimize
$\widehat{\mathcal L}_{\text{pred}}+\beta\,\widehat{\mathcal L}_{\text{ov}}+\gamma\,\widehat{\mathcal L}_{\text{cb}}$\;
}
Freeze Stage-1 model; define hard codes for Stage 2 by $C_i=\arg\max g_\phi(A_i)$\;

\BlankLine
\textbf{Stage 2: Learn outcome head, propensity proxy, and code policy}\;
Fit $\widehat m(x,c)\approx \mathbb E[Y\mid X=x,C=c]$ on $(X_i,C_i,Y_i)$
(neural head; optional initialization from Stage-1 head)\;
Fit $\widehat\mu_C(c\mid x)$ via multiclass cross-entropy on $(X_i,C_i)$\;
Define $\pi_C(x)\in\arg\max_{c:\,\widehat\mu_C(c\mid x)\ge\tau}\widehat m(x,c)$\;

\BlankLine
\textbf{Stage 3: Deployment at context $x$}\;
Select target code $c^\star=\pi_C(x)$\;

\For{$t=1,\dots,T_{\max}$}{
Sample $a_t\sim\mu(\cdot\mid x)$\;
\If{$\arg\max g_\phi(a_t)=c^\star$}{
\KwRet{$a_t$}
}
}
\KwRet{sample from behavior $\mu(\cdot\mid x)$} \tcp*{fallback after cap}
\end{algorithm}

\textbf{Synthetic diagnostics.}
 We follow the data generation and evaluation as described in ~\ref{app:synthetic_diagnostics}. Table~\ref{tab:synthetic_impl_params} gives the implementation choices for the reported runs. We provide code of our method in \url{https://github.com/JSchweisthal/PolicyLift}

\begin{table}[h]
\centering
\small
\caption{Implementation details for the synthetic diagnostics.}
\label{tab:synthetic_impl_params}
\begin{tabular}{p{0.30\linewidth}p{0.62\linewidth}}
\toprule
Component & Choice \\
\midrule
Code space & $K=8$ \\
Context/action dimensions & $12$ and $8+8+4+64+2$ \\
Support threshold & $\tau=0.03$ \\
Deployment diagnostic & oracle codes and oracle code policy; vary only deployment rule \\
Predictiveness diagnostic & BCL and oracle Stage~2; vary code representation and $\alpha_{\mathrm{pred}}$ \\
Overlap diagnostic & learned RA Stage~2 with support rule; vary representation and $\kappa_{\mathrm{ov}}$ \\
\framework{} Stage~1 & MLP hidden sizes $(64,64)$, latent dimension $8$, Gumbel temperature $0.5$ \\
Optimization & $50$ epochs, batch size $32$, learning rate $10^{-4}$ \\
Main regularization & $\beta=0.0001$, $\gamma=0.2$ for Pred.+bal. \\
Reported uncertainty & mean with one standard error across seeds \\
\bottomrule
\end{tabular}
\end{table}

\textbf{Semi-synthetic headline benchmark.}
The semi-synthetic experiments use the logged headlines, validation procedure, and candidate pool from Appendix~\ref{app:semisynth_construction}. Table~\ref{tab:semisynth_impl_params} gives the implementation choices for the reported runs.

\begin{table}[h]
\centering
\small
\caption{Implementation details for the semi-synthetic headline benchmark.}
\label{tab:semisynth_impl_params}
\begin{tabular}{p{0.30\linewidth}p{0.62\linewidth}}
\toprule
Component & Choice \\
\midrule
Data & $905$ logged actions, $2513$ generated actions, $5$ seeds \\
Data generator model & GPT-5.4-mini\\
Code space and support & $K=8$, $\tau=0.03$ \\
Reward & strategy-rank lookup with $(1.00,0.65,0.15,-0.15)$ and context baseline scale $0.2$ \\
Context representation & sentence-transformers/all-MiniLM-L6-v2; append $3.0\,e_{Z^\star}$ \\
Action representation & sentence-transformers/all-MiniLM-L6-v2; append noisy $C^\star$, strategy, nuisance, and random blocks \\
Action-block parameters & $(0.8,0.15)$ for $C^\star$, $(0.6,0.25)$ for strategy, nuisance dim $64$ with scale $4.0$ and noise $0.5$ \\
\framework{} Stage~1 & MLP hidden sizes $(64,64)$, latent dimension $8$, Gumbel temperature $0.5$ \\
Optimization & $50$ epochs, batch size $32$, learning rate $10^{-4}$ \\
Main regularization & $\beta=0.0001$, $\gamma=0.2$ for Pred.+bal. \\
\bottomrule
\end{tabular}
\end{table}

\subsection{Implementation of the experimental comparisons}
\label{app:comparison_implementation}

\paragraph{Code substitutions.}
All configurations use $K=8$ codes. Random codes provide an outcome-independent comparison: synthetic and OfflinePrompts actions receive a fixed random-projection assignment with quantile bins fitted on logged training actions; headlines use seeded balanced random assignments. The $k$-means alternative clusters action features without reward supervision. Predictive-only codes retain the learned architecture and prediction loss but set $\beta=\gamma=0$. Code substitutions retain the RA objective and the same evaluation protocol. For objective substitutions, all methods share the proposed encoder within each seed.

\paragraph{Stage~2 objectives.}
Our Code-IPS objective applies inverse propensity weighting to the observed code, following the marginalization principle of MIPS~\citep{saito2022off}. Code-DR adds a code-level reward prediction and an importance-weighted residual correction, motivated by regression-assisted estimation such as OffCEM~\citep{saito2023off}. Kernel-IS adapts the sentence-level smoothing idea of~\citet{kiyohara2025off}: Gaussian distances from action features to code centroids define normalized soft memberships, and a soft-label propensity model estimates their conditional means. We use the median-distance bandwidth heuristic. The kernel objective weights rewards by policy probabilities times these memberships divided by their estimated conditional means. For all three objectives, inverse propensities are capped at $1/\tau$, where $\tau=0.03$.

We optimize linear softmax code policies on standardized context features and deploy their highest-scoring supported code. RA instead maximizes the fitted code-level outcome directly. IPS and DR use hard-code propensities; kernel-IS uses soft-membership propensities for its objective and support on the text benchmarks. The synthetic comparison uses the common hard-code support at deployment for every objective. Synthetic support is computed from grouped simulator frequencies conditional on the latent context group, following the controlled diagnostic protocol; headline and OfflinePrompts propensities are fitted from logged data.
Within each dataset, the baselines share the candidate generator and deployment evaluator.

\textbf{Main table: configurations and evaluation.}
The synthetic column uses $\alpha_{\mathrm{pred}}=1$, $\alpha_{\mathrm{dep}}=0$, $\kappa_{\mathrm{ov}}=0.6$, 5,000 logged observations, and 10,000 noise-free evaluation samples. Its Stage~1 network has two hidden layers of width~32, learning rate $10^{-3}$, 50 epochs, batch size~32, and $(\beta,\gamma)=(0,0.001)$. We evaluate ideal BCL through grouped simulator conditional rewards. The headline column retains MiniLM features alongside the constructed context and action feature blocks described in Appendix~\ref{app:semisynth_construction}; it uses width~64, learning rate $10^{-4}$, 50 epochs, batch size~32, and $(\beta,\gamma)=(10^{-4},0.2)$. We perform runs across five random seeds. 

\paragraph{Prompt-based reference methods.}
On OfflinePrompts, \emph{prompt-based regression} fits a neural reward predictor to context and logged prompt embeddings, then selects the highest-scoring prompt from the 100-prompt catalog. The \emph{DSO-style prompt policy} learns a context-dependent distribution over that 100-prompt catalog using kernel importance weighting in sentence-embedding space. It uses the logging distribution and an exact sum over the pre-generated prompt outputs to compute the kernel-smoothed logging marginal; importance ratios are clipped at 200. The bandwidth is the median within-movie sentence-embedding distance. Both baselines use the same generated sentences, logged reward data, test contexts, and reward simulator as \framework, but additionally have access to prompt information that is unavailable to \framework. In particular, prompt-based regression observes logged prompt embeddings, while the DSO-style policy accesses the prompt catalog, its generated outputs, and the logging distribution.

\textbf{Compute resources.} Experimental runs were conducted on commodity CPU hardware without GPU acceleration, with typical runs completing in under one hour. The semi-synthetic benchmark additionally required OpenAI API calls for headline generation and validation during dataset construction, which also completed within approximately one hour.
\clearpage

\end{document}